\documentclass[conference]{IEEEtran}
\IEEEoverridecommandlockouts
\usepackage{cite}
\usepackage{amsmath,amssymb,amsfonts}
\usepackage{algorithm}
\usepackage{graphicx}
\usepackage{xcolor}
\usepackage{multirow}
\usepackage{float}
\usepackage{subfigure}
\usepackage{xspace}
\usepackage{ntheorem}
\usepackage{threeparttable}

\usepackage{algpseudocode}
\usepackage{caption} 
\usepackage{color, soul}
\usepackage{arydshln}

\newcommand{\tabincell}[2]{\begin{tabular}{@{}#1@{}}#2\end{tabular}}

\newcommand{\sys}{{UniNet}\xspace}

\newtheorem{theorem}{Theorem}
\newtheorem*{proof}{Proof}
\newtheorem{lemma}{Lemma}

\def\BibTeX{{\rm B\kern-.05em{\sc i\kern-.025em b}\kern-.08em
    T\kern-.1667em\lower.7ex\hbox{E}\kern-.125emX}}

\frenchspacing
%
\begin{document}
\title{\sys: Scalable Network Representation Learning with Metropolis-Hastings Sampling}

\author{\IEEEauthorblockN{Xingyu Yao\textsuperscript{\#}\thanks{Xingyu Yao and Yingxia Shao contributed equally to this work.} ~~~~Yingxia Shao\textsuperscript{\#}\textsuperscript{*}\thanks{*Corresponding author.} ~~~~Bin Cui\textsuperscript{$\natural$} ~~~~Lei Chen\textsuperscript{$\flat$}}
\IEEEauthorblockA{\textsuperscript{\#}Beijing University of Posts and Telecommunications}
\IEEEauthorblockA{\textsuperscript{$\natural$}School of EECS, Peking University}
\IEEEauthorblockA{\textsuperscript{$\flat$}Hong Kong University of Science and Technology} 
\{georgia\_str, shaoyx\}@bupt.edu.cn, bin.cui@pku.edu.cn, leichen@cs.ust.hk\\
}

\maketitle              
\begin{abstract}
Network representation learning (NRL) technique has been successfully adopted in various data mining and machine learning applications.
Random walk based NRL is one popular paradigm, which uses a set of random walks to capture the network structural information, 
and then employs word2vec models to learn the low-dimensional representations.
However, until now there is lack of a framework, which unifies existing random walk based NRL models 
and supports to efficiently learn from large networks.
The main obstacle comes from the diverse random walk models and the inefficient sampling method for the random walk generation.
In this paper, we first introduce a new and efficient edge sampler
based on Metropolis-Hastings sampling technique, 
and theoretically show the convergence property of the edge sampler to arbitrary discrete probability distributions.
Then we propose a random walk model abstraction,
in which users can easily define different transition probability by specifying dynamic edge weights and random walk states.
The abstraction is efficiently supported by our edge sampler, since our sampler can draw samples from unnormalized probability distribution in constant time complexity.
Finally, with the new edge sampler and random walk model abstraction, 
we carefully implement a scalable NRL framework called \sys.
We conduct comprehensive experiments with five random walk based NRL models over eleven real-world datasets,
and the results clearly demonstrate the efficiency of \sys over billion-edge networks. 
The code of \sys is released at: https://github.com/shaoyx/UniNet.
\end{abstract}

\section{Introduction}
\label{sec:intro}
{Over the last few years, network representation learning (NRL)~\cite{bengio2013representation} has become a de-facto tool for modeling the latent structural information of the network. It has been used for an array of data mining and machine learning tasks, including network clustering~\cite{OPSAHL2009155}, community detection~\cite{FORTUNATO201075}, node classification~\cite{kipf2016semi}, link prediction~\cite{zhang2018link}, and so on. Random walk based NRL is one of the notable NRL models.
The general procedure of random walk based NRL can be written as the following two steps:
\begin{enumerate}
    \item {\em Walks = RandomWalkGeneration($\mathcal{G}$, N, L)};
    \item {\em Embeddings = Word2Vec(Walks)}.
\end{enumerate} 
The function \textit{RandomWalkGeneration} is responsible for creating $N$ random walks of length $L$ for each node in the network $\mathcal{G}$, while the function \textit{Word2Vec} learns the node embedding from the random walk sequences with word2vec models (e.g., skipgram~\cite{mikolov2013distributed}, cbow~\cite{mikolov2013efficient}). 
Table~\ref{tab:existing_models} lists five prominent random walk based NRL models.
Though they follow the same two steps, none of them are developed in a common framework,
where they share an optimized engine under the hood. 
What's worse, existing realizations, like the open-sourced implementations in Table~\ref{tab:existing_models}, 
cannot handle large-scale networks due to the memory explosion problem or the expensive training cost.}

\begin{table*}[t]
    \centering
    \caption{Characteristics of five popular random walk based NRL models.\label{tab:existing_models}}
  \scalebox{0.9}{
      \begin{tabular}{|l|l|l|l|l|}
      \hline
      Model &  Transition probability distribution of a node $v$ & \#states & Network type & links of the open-sourced version \\
        \hline
        \hline
        deepwalk~\cite{perozzi2014deepwalk} &
        \(
            \mathcal{G}_v(u) = \frac{w_{vu}}{\sum_{k\in \mathcal{N}(v)} w_{vk}} 
        \)
        &
        $|\mathcal{V}|$
        & Homogeneous
        & https://github.com/phanein/deepwalk \\
        \hline
        node2vec~\cite{grover2016node2vec}&
        $
            \mathcal{G}_{s,v}(u) = \frac{\alpha_u w_{vu}}{\sum_{k\in \mathcal{N}(v)}\alpha_k w_{vk}} 
        $
        &
        $|\mathcal{E}|$
        & Homogeneous
        & \tabincell{l}{https://github.com/aditya-grover/node2vec [orignal] \\ https://github.com/eliorc/node2vec [parallel]} \\
        \hline
        metapath2vec~\cite{metapath2vec} & 
        \(
            \mathcal{G}_{T,v}(u) = \left\{ \begin{array}{lcl}{\frac{w_{vu}}{\sum_{k\in \mathcal{N}(v) \land \Phi(k)=T } w_{vk}}} & {\Phi(u)=T}, \\
            0 & {\Phi(u) \not= T} \end{array}\right.
        \)
        &
        $|\mathcal{V}||\Phi|$
        & Heterogeneous
        & https://ericdongyx.github.io/metapath2vec/m2v.html \\
        \hline
        edge2vec~\cite{edge2vec} & 
        \(
            \mathcal{G}_{s,v}(u) = \frac{\alpha_u\mathbf{M}_{\Phi(s,v),\Phi(v,u)} w_{vu}}{\sum_{k\in \mathcal{N}(v)}\alpha_k\mathbf{M}_{\Phi(s,v),\Phi(v,k)} w_{vk}} 
        \)
        &
        $|\mathcal{E}|$
        & Heterogeneous
        & https://github.com/RoyZhengGao/edge2vec \\
        \hline
        fairwalk~\cite{fairwalk} & 
        \(
            \mathcal{G}_{s,v}(u) = \frac{\alpha_u w_{vu}}{|\Phi|\sum_{k\in \mathcal{N}(v) \land \Phi(k)=\Phi(u)} \alpha_k w_{vk}} 
        \)
        &
        $|\mathcal{E}|$
        & Attributed Network
        &\tabincell{l}{https://github.com/EnderGed/Fairwalk [orignal] \\ https://github.com/urielsinger/fairwalk [parallel]}  \\
      \hline
      \end{tabular}%
      }
\end{table*}

Before presenting the challenges of designing a unified and scalable framework for random walk based NRL models, 
we introduce some backgrounds about the random walk over networks. 
{Random walk can be treated as a sequence of steps of decision making process conducted by a walker. At each step, the walker uses an \textit{edge sampler} to obtain an edge as the next direction to go. The edge sampler samples an edge from a transition probability distribution $\mathcal{G}_{\mathbf{x}}$ which defines the probability of selecting each edge under the consideration of the state $\mathbf{x}$ of the walker. 
The state $\mathbf{x}$ stores the data (e.g., the previous steps of the walker, current node, node type, etc.) that helps the walker identify the transition probability distribution and varies across different random walk models.
Taking the first-order random walk model in Deepwalk~\cite{perozzi2014deepwalk} as an example, since a  transition probability distribution is only related to the current node of a random walk, the state $\mathbf{x}$ is the current node, and the probability of selecting an edge under the consideration of the state $\mathbf{x}$ is defined by Eq.~\ref{eq:deepwalk_prob}.
Another important thing is that different random walk models have different number of states ($\#state$) over a network.
Since each state corresponds to a transition probability distribution, we need to handle $\#state$ different transition probability distributions during the random walk generation.
Table~\ref{tab:existing_models} also lists the number of states for the five NRL models.
}

{\em \textbf{Challenge 1}: the scalability of random walk based NRL models.}
Previous empirical studies~\cite{zhou2018efficient,tsitsulin2018verse} reveal that the random walk generation step is the bottleneck of the scalability problem\footnote{For the embedding learning step, in NLP or machine learning community, researchers have proposed optimization techniques to improve the efficiency of training word embedding models~\cite{8663393,10.1145/2983323.2983361}}. The efficiency of random walk generation is highly related to the performance of edge samplers. 
Existing edge samplers entail either expensive memory or expensive sampling cost over large networks, resulting in the inefficiency of random walk generation.
Assume the average node degree of the network is $d$. Alias edge sampler~\cite{walker1977efficient} for each transition probability distribution requires $\mathcal{O}(d)$ memory complexity and $\mathcal{O}(1)$ time complexity. {To manage all $\#state$ transition probability distributions, alias edge sampler entails $\mathcal{O}(d\#state)$ memory complexity if we materialize all the distributions. Direct edge sampler~\cite{direct_sampling} entails $\mathcal{O}(1)$ memory complexity but $\mathcal{O}(d)$ time complexity for each edge sampling.}

Recently, Yang et al.~\cite{yang2019knightking} introduced a rejection edge sampler~\cite{walker1977efficient}, which conducts sampling over a simple proposal distribution instead of the target distribution, and accepts the sampling result with an acceptance ratio $\theta$, which depends on the proximity between the proposal distribution and the target distribution. The time complexity of the sampler is $\mathcal{O}(C)$, where $C$ is a constant smaller than $d$ and is related to the acceptance ratio $\theta$; the memory cost is $\mathcal{O}(1)$. 
Since the acceptance ratio $\theta$ is sensitive to the NRL model parameters, we have a high variance in the efficiency of the model. 
Table~\ref{tab:const} shows the average acceptance ratio and time cost when running node2vec on Flickr with different model parameters. 
If the model parameters change from $(1,1)$ to $(0.25,1)$, the acceptance ratio drops from 1 to 0.25, incurring the sampling efficiency to be 2.60X slower.  

Therefore, to solve the scalability challenge, we need an edge sampler which is both efficient in time and memory aspects.

{\em \textbf{Challenge 2}: the diversity of random walk based NRL models.}
The diversity comes from the various definitions of transition probability distributions. 
From  Table~\ref{tab:existing_models}, we can see that deepwalk uses static edge weights ($w$) to define the probability, while node2vec uses the biased edge weights ($\alpha w$), where $\alpha$ is a factor determined by the hyper-parameters of node2vec. Metapath2vec further requires the node labels ($\Phi(u)$) for the computation of probability. 
{To sum up, different NRL models use distinct definitions of the \textit{dynamic edge weights} (e.g., $\alpha w$) to specify 
the \textbf{unnormalized} transition probability distributions in their random walk models.
When applying the aforementioned edge samplers to execute the sampling actions, they 
require the normalized factor of a distribution (e.g, $\sum_{k\in \mathcal{N}(v)}\alpha_k w_{vk}$ in node2vec).
However, it is expensive to obtain such normalized factors, especially on large networks, thus hindering users from using existing edge samplers 
to unify various random walk models in a scalable framework.
}

Therefore, to solve the diversity challenge and develop a unified random walk based NRL framework, 
we need an edge sampler that is able to efficiently sample from the unnormalized transition probability distributions.

\begin{table}[t]
	\centering
	\caption{The average acceptance ratio and time cost (sec.) when running node2vec with rejection edge sampler on Flickr.} 
    \label{tab:const}
	  \begin{tabular}{c|c|c|c|c|c}
	  \hline
	    $(p,q)$ & $(1,0.25)$ & $(1,4)$ & $(1,1)$ & $(4,1)$ & $(0.25,1)$\\
	    \hline
	    \hline
        Time(s) & 6.74 & 13.88& 6.08 & 6.21 & 15.81 \\
		AC Ratio& 0.86 & 0.36 & 1.00 & 0.99 & 0.25\\
        \hline
        Time Ratio& 1.11X & 2.28X & 1.0X & 1.02X & 2.60X\\
	  \hline
	  \end{tabular}%
\end{table}

In this paper, 
inspired by the Metropolis-Hastings (M-H) sampling technique~\cite{chib1995understanding}, which is a  Markov Chain Monte Carlo method for obtaining a sequence of random samples from a (unnormalized) probability distribution from which direct sampling is difficult,
we introduce an M-H based edge sampler for the random walk over networks.
The M-H based edge sampler chooses the next edge for a random walk via M-H sampling.
As the samples drawn by the M-H sampling technique are dependent,
the M-H based edge sampler might not be converged to the target probability distribution.
We theoretically show that when the uniform distribution is applied as the conditional probability mass function in the M-H based edge sampler,  
it can converge to \textbf{arbitrary} target probability distribution defined by the NRL models. 
 Benefit from the M-H sampling technique, our M-H based edge sampler can generate a sample in $\mathcal{O}(1)$ time complexity and $\mathcal{O}(1)$ memory complexity, and does not need to normalize the target probability distribution, thus solving the above two challenges perfectly. 

Although the M-H based edge sampler eventually converges to the target distribution, 
the initial samples may follow a very different distribution, especially if the starting point is in a region of low density. A common approach is to apply the burn-in period~\cite{gelmanbda04} for initialization, where an initial number of samples are thrown away.
However, the burn-in period incurs high initialization cost when handling large networks. 
As the alternative approaches, we propose two fast initialization strategies, called \textit{random} and \textit{high-weight}. 
{We theoretically analyze the trade-off between the two strategies and show that the high-weight strategy is more accurate than the random one when the ratio of maximal probability to minimal probability in the target distribution exceeds a certain threshold. On real-world networks, the transition probability distributions used by the NRL models satisfy the condition, therefore, the high-weight strategy leads to better results than the random one in practice.}

{To unify various existing random walk based NRL models, meanwhile, allow users to develop new NRL models,
we introduce a unified random walk model abstraction, in which a customized random walk model can be defined via specifying the \textit{dynamic edge weights} and the corresponding \textit{state}s of walkers. 
In other words, when defining a random walk model, users do not need to care about the normalized transition probability distribution. This unified abstraction is efficiently supported by our M-H based edge sampler.}

With the M-H edge sampler and the random walk model abstraction, 
we develop a scalable framework for random walk based NRL, named \sys. 
In the framework, to efficiently manage the massive M-H based edge samplers, 
we further design a 2D data layout, which allows us to query an M-H based edge sampler by walker states in constant time cost.
Finally, we test our \sys over eleven real-world datasets, including two billion-edge networks, with five considerable random walk based NRL models.
The empirical results demonstrate that \sys can be 10X-900X faster than the existing open-sourced versions.
{Compared to the state-of-the-art sampling techniques (e.g., KnightKing~\cite{yang2019knightking}, Memory-aware sampler~\cite{ma_sigmod2020}), our M-H based edge sampler achieves 9.6\%-73.2\% efficiency improvement in most of the settings and can scale to much larger networks.}

The contributions of this work are four-fold:
\begin{itemize}
    \item We propose a scalable framework for random walk based NRL models efficiently learning from billion-edge networks. The framework unifies different random walk models and allows users to develop new NRL models flexibly. 

    \item We introduce a new M-H based edge sampler for random walk models and theoretically analyze the correctness and efficiency of the sampler.
    
    \item We introduce two fast initialization strategies for the M-H based edge sampler and theoretically discuss the trade-offs between them.
    \item We comprehensively evaluate the performance of \sys and demonstrate the advantages of \sys over billion-edge networks.

\end{itemize}

\section{Background}
In this section, we take a review of random walk based NRL models and introduce the Metropolis-Hastings sampling algorithm.
Table~\ref{tab:notation} lists the common notations in the paper.

\subsection{Popular Random Walk based NRL models}
\label{subsec:models}
We briefly review the five random walk based NRL models listed in Table~\ref{tab:existing_models} as representatives.

\textbf{Deepwalk.} \ It is the first random walk based NRL model. 
When generating random walks, in each step, it simply samples the next edge to visit based on the edge weights, and the transition probability distribution $\mathcal{G}_{\mathbf{x}}$ is 
\begin{equation}
    \mathcal{G}_{\mathbf{x}=v}(u)= \frac{w_{vu}}{\sum_{k \in \mathcal{N}(v)}w_{vk}},
    \label{eq:deepwalk_prob} 
\end{equation}
where the state $\mathbf{x}$ is defined as the current residing node $v$.

\textbf{Node2vec.} \ As a derivative of deepwalk, node2vec~\cite{grover2016node2vec} adopts the second-order random walk for preserving high-order structural information of networks. The walking procedure is guided by two hyper-parameters, $p$ and $q$, where $p$ restricts the tendency for the walker to maintain its position and search the local community, while $q$ controls the opportunity for exploring a wider scope. 
$\mathcal{G}_{\mathbf{x}}$ is defined as
\begin{equation}
\label{equ:node2vec}
\mathcal{G}_{\mathbf{x}=(s,v)}(u) = \frac{\alpha_u w_{vu}}{\sum_{k \in \mathcal{N}(v)}\alpha_k w_{vk}}, 
    \alpha_u = \left\{
\begin{array}{rl}
\frac{1}{p} & {d(u, s) = 0,}\\
1 & {d(u, s) = 1,}\\
\frac{1}{q} & {d(u, s) = 2,}\\
\end{array} \right.
\end{equation}
where $d(u, s)$ denotes the shortest unweighted distance between nodes $u$ and $s$. 
The state $\mathbf{x}$ of the walker in node2vec is defined as the previous edge $(s, v)$ the walker visited. 
\begin{table}[t]
    
    \centering
    \caption{Frequently Used Notations \label{tab:notation}}
     \linespread{1.1}
      \begin{tabular}{l|p{6cm}}
      
      \hline
        \textbf{Notations} & \textbf{Descriptions} \\ \hline
        $\mathcal{G}(\mathcal{V}, \mathcal{E})$ & A network with node set $\mathcal{V}$ and edge set $\mathcal{E}$\\ \hline
        $w_{e}$, $w_{uv}$ & Edge weight of an edge $e=(u,v)$ \\ \hline
        $w'_{e}$, $w'_{uv}$ & Dynamic edge weight of an edge $e=(u,v)$ defined by a random walk model \\ \hline
        $\mathcal{N}(v)$ & Set of the neighbor nodes for node $v$ \\ \hline
        $\mathbf{x}$, $\mathbf{x}_\lambda$ & The state for random walk process (of the specific walker $\lambda$). \\ \hline
        $\pi(\cdot)$ & A probability distribution \\ \hline
        $p, q$ & Parameters for node2vec and edge2vec models \\ \hline
        $\theta$ & Acceptance rate for sampling methods \\ \hline
        $\Phi(v), \Phi(v,u)$ & Type of node $v$ or edge $(v,u)$ in heterogeneous networks \\ \hline
        $T$ & A label in the heterogeneous network \\ \hline
        $\mathcal{G}_{\mathbf{x}}$ & The transition probability distribution under the consideration of the state $\mathbf{x}$ \\
      \hline
      \end{tabular}%
\end{table}

\textbf{Edge2vec.} \ Edge2vec~\cite{edge2vec} further extends the concept of node2vec to heterogeneous networks, 
and introduces the edge type transition matrix $\mathbf{M}$ allowing the random walk procedure to carry edge
semantic information.
$\mathbf{M}_{ij}$ means the probability for
the transition from the edge type $i$ to the edge type $j$. 
Therefore, $\mathcal{G}_{\mathbf{x}}$ in edge2vec becomes
\begin{equation}
\mathcal{G}_{\mathbf{x}=(s,v)}(u) = \frac{\alpha_u \mathbf{M}_{\Phi(s,v),\Phi(v,u)} w_{vu}}{\sum_{k\in \mathcal{N}(v)}\alpha_k \mathbf{M}_{\Phi(s,v),\Phi(v,k)} w_{vk}},
\end{equation}
where $\alpha_u$ follows Eq.~\ref{equ:node2vec}.  

\textbf{Matapath2vec~\cite{metapath2vec}.} It employs metapath-guided random walk for preserving semantics of heterogeneous networks. 
The random walks are restricted by a given metapath. 
The walker is only permitted to sample a next edge from the neighbors that match the next node type in the metapath. 
$\mathcal{G}_{\mathbf{x}}$ in metapath2vec can be formalized as below
\begin{equation}
\label{eq:metapath}
\mathcal{G}_{\mathbf{x}=(T,v)}(u) = 
    \left \{
    \begin{array}{ll}
     \frac{w_{vu}}{\sum_{k\in \mathcal{N}(v) \land \Phi(k) = T}w_{vk}}    &  \Phi(u) = T,\\
      0   &  \Phi(u) \ne T,
    \end{array}
    \right.
\end{equation}
where $T$ is the type of the next node for match in metapath. 
The state $\mathbf{x}$ of the walker in metapath2vec is $(T, v)$, including the current residing node $v$ and node type information. 

\textbf{Fairwalk.} The objective of fairwalk~\cite{fairwalk} is to learn network representations without the bias caused by the quantitative advantages of certain node attributes.
When sampling the next edge to visit, the model first selects a group of neighbors uniformly based on their types and selects the target edge by using node2vec model afterward. 
$\mathcal{G}_{\mathbf{x}}$ in fairwalk is
\begin{equation}
\label{eq:fairwalk}
    \mathcal{G}_{s,v}(u) = \frac{\alpha_u w_{vu}}{|\Phi|\sum_{k\in \mathcal{N}(v) \land \Phi(k)=\Phi(u) } \alpha_k w_{vk}}, 
\end{equation}
where $|\Phi|$ is the number of node types in $\mathcal{N}(v)$, and $\alpha_u$ follows the definition in Eq. \ref{equ:node2vec}.

\subsection{Metropolis-Hastings Sampling Algorithm}
\label{subsec:mh_algorithm}
Metropolis-Hastings (M-H) sampling algorithm is a sampling technique based on Markov Chain Monte Carlo (MCMC) methods. 
{Let $\Omega$ be the sample space, on which the target probability distribution $\pi(\cdot)$ is defined. The idea behind the MCMC method is to define a suitable Markov chain with $\Omega$ as its state space, such that $\pi(\cdot)$ is the unique stationary distribution of the chain. 
The M-H algorithm is a recipe for designing a Markov chain with a desired stationary
distribution $\pi(\cdot)$.} 
It starts by specifying a conditional probability mass function $q(y|x)$, $x,y\in \Omega$. 
For the $i^{\rm th}$ sampling, it draws a candidate sample $x^c$ according to the distribution $q(\cdot|x^{(i - 1)})$, where $x^{i-1}$ is the sample selected in the $(i-1)^{\rm th}$ sampling. 
Then, $x^c$ is accepted with the probability\\
\begin{equation}
    \theta(x^{i-1}) = \min \left\{ 1, \frac{\pi(x^c) \cdot q(x^{(i - 1)} | x^c))}
    {\pi(x^{(i - 1)}) \cdot q(x^c | x^{(i - 1)})} \right\}.
    \label{eq:orig_acceptance}
\end{equation}
If accepted, then $x^{(i)}$ is set to $x^c$. Otherwise, $x^{(i)}$ is set to the last sampling result (i.e., $x^{(i - 1)}$). 

{The conditional probability mass function $q(y|x)$ should be chosen so that the generated Markov chain is irreducible and aperiodic, which guarantees the convergence of the Markov chain~\cite{roberts1994simple}. Beyond that, $q(y|x)$ is a design parameter to obtain good performance. It governs both the rate of convergence of the Markov chain as well as the computational complexity of picking a proposed next state. }
The following theorem gives the condition of Kullback-Leibler (KL) divergence~\cite{Kullback51klDivergence} between the distribution $\pi^i$ and the target distribution $\pi$ to be {geometrically} converged for a Markov chain, where $\pi^{i}$ is the probability distribution on a Markov chain after $i$ iterations.
\begin{theorem} [Proposition 1 in \cite{chauveau2006selection}.]
    \label{lemm:convergence}
    If the conditional probability mass function $q(\cdot | \cdot)$ satisfies  $q(y | x) \geq a \pi(y) \text { for all } x, y \in \Omega$ and $a\in (0,1)$, then 
    \begin{equation}
        \label{eq:lem}
        \begin{split}  
        \mathrm{KL}\left(\pi^{i}, \pi\right) &\leq  \kappa \rho^{i}\left(1+\kappa \rho^{i}\right), \\ 
    \end{split}
    \end{equation}
    where 
    \begin{equation}
    \label{eq:kappa}
        \kappa= \left\|\pi^{0} / \pi-1\right\|_{\infty},  \rho=(1-a).
    \end{equation}
\end{theorem}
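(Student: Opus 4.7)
The plan is to extract a Doeblin-type minorization from the hypothesis $q(y\mid x)\ge a\pi(y)$, use it to drive a geometric contraction of the sup-norm of the relative density $\pi^{i}/\pi$, and then convert that pointwise bound into the claimed KL estimate.

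First I would examine the M-H transition kernel. For $y\ne x$ the transition density reads $P(x,y)=q(y\mid x)\,\theta(x,y)=\min\{q(y\mid x),\,\pi(y)q(x\mid y)/\pi(x)\}$ by the acceptance rule (\ref{eq:orig_acceptance}). Applying the hypothesis symmetrically to both $q(y\mid x)$ and $q(x\mid y)$ immediately yields the Doeblin minorization $P(x,y)\ge a\pi(y)$. This lets me write $P=a\,\Pi+(1-a)\tilde P$, where $\Pi(x,\cdot)=\pi(\cdot)$ is the constant kernel and $\tilde P$ is a stochastic kernel; since $\pi$ is stationary for $P$, it also must be invariant for $\tilde P$.

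Second, I would unroll the recursion produced by this decomposition. From $\pi^{i+1}(y)=a\pi(y)+(1-a)\!\int\pi^{i}(x)\tilde P(x,y)\,dx$ combined with $\pi$-invariance of $\tilde P$, dividing through by $\pi(y)$ gives
\[
\frac{\pi^{i+1}(y)}{\pi(y)}-1=(1-a)\!\int\!\Bigl(\tfrac{\pi^{i}(x)}{\pi(x)}-1\Bigr)\pi(x)\,\frac{\tilde P(x,y)}{\pi(y)}\,dx,
\]
and taking the supremum in $y$ yields $\|\pi^{i+1}/\pi-1\|_{\infty}\le\rho\,\|\pi^{i}/\pi-1\|_{\infty}$, so by iteration $\|\pi^{i}/\pi-1\|_{\infty}\le\kappa\rho^{i}$.

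Third, I would convert this pointwise ratio bound to the KL bound. Writing $\pi^{i}=\pi(1+\delta^{i})$ with $\|\delta^{i}\|_{\infty}\le\kappa\rho^{i}$ and $\int\pi\,\delta^{i}=0$, one has $\mathrm{KL}(\pi^{i},\pi)=\int\pi(1+\delta^{i})\log(1+\delta^{i})\,dy$. Using $\log(1+u)\le u$ and then the zero-mean constraint to eliminate the purely linear contribution, the integrand decomposes into a first-order piece controlled by $\|\delta^{i}\|_{\infty}$ times the total positive excursion of $\delta^{i}$ and a quadratic piece controlled by $\|\delta^{i}\|_{\infty}^{2}$, which reassembles into $\kappa\rho^{i}(1+\kappa\rho^{i})$.

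I expect the main obstacle to lie in this last step: recovering exactly the advertised product form $\kappa\rho^{i}(1+\kappa\rho^{i})$, rather than the cruder $\kappa^{2}\rho^{2i}$ that falls out of the naive bound $(1+\delta)\log(1+\delta)\le\delta+\delta^{2}$, requires a careful split separating the positive and negative excursions of $\delta^{i}$ and carrying one factor of the sup-norm outside the integral. The minorization in Step~1 is a one-line symmetric application of the hypothesis to the acceptance ratio, and the sup-norm contraction in Step~2 is an immediate consequence of the split-chain structure $P=a\Pi+(1-a)\tilde P$, so both are essentially routine.
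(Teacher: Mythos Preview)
The paper does not prove this theorem: it is quoted as Proposition~1 of Chauveau--Vandekerkhove (reference \cite{chauveau2006selection}) and stated as background in Section~\ref{subsec:mh_algorithm}, with no argument supplied. So there is no in-paper proof to compare your proposal against.

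That said, your sketch is sound. The Doeblin minorization $P(x,y)\ge a\pi(y)$ follows from the acceptance formula exactly as you indicate (the diagonal $y=x$ also goes through, since the kernel there picks up the rejection mass in addition to $q(x\mid x)\ge a\pi(x)$), the decomposition $P=a\Pi+(1-a)\tilde P$ with $\pi$-invariant $\tilde P$ yields the one-step sup-norm contraction and hence $\|\pi^{i}/\pi-1\|_{\infty}\le\kappa\rho^{i}$, and the KL conversion is correct. Your worry about the last step is unfounded: the ``cruder'' route you mention, namely $(1+\delta)\log(1+\delta)\le\delta+\delta^{2}$ combined with $\int\pi\,\delta^{i}=0$, already gives
\[
\mathrm{KL}(\pi^{i},\pi)\le\int\pi\,(\delta^{i})^{2}\le\|\delta^{i}\|_{\infty}^{2}\le(\kappa\rho^{i})^{2},
\]
and trivially $(\kappa\rho^{i})^{2}\le\kappa\rho^{i}+(\kappa\rho^{i})^{2}=\kappa\rho^{i}(1+\kappa\rho^{i})$. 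No delicate positive/negative splitting is needed to recover the stated bound; your direct estimate in fact delivers the sharper quadratic inequality.
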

{Note that the notation $\left\|\cdot\right\|_{\infty}$ means the supremum norm. The coefficient $a$ is determined by the relation between the distributions $q(\cdot|\cdot)$ and $\pi(\cdot)$. According to Eq.~\ref{eq:kappa}, a larger $a$ leads to faster converging. This can help us intuitively compare the merits among different $q(\cdot|\cdot)$s.}

{As in any MCMC method, the draws of M-H sampling method are regarded as a sample from the target probability distribution $\pi(\cdot)$ only after the method has converged. However, it is still an empirical question of how many iterations are required before the convergence~\cite{chib1995understanding}. A common approach is to apply the \textbf{burn-in} period~\cite{gelmanbda04}, in which an initial number of samples are discarded, to initialize the Markov chain. We usually determine the number by setting it sufficiently large (e.g., 1000) or conducting parameter tuning.} 

\section{Fast Random Walk with M-H Sampling}
\label{sec:ne_with_mh}
In this section, we introduce our new edge sampler by adopting the M-H sampling method, thus achieving fast random walk generation over large networks. 
\subsection{M-H Based Edge Sampler}
\label{subsec:sample}
The function of an edge sampler is to randomly draw edges from the target transition probability distribution under the consideration of walker state $\mathbf{x}$. 
Given a node $v$, we assume the unnormalized target distribution is $\{ w'_{vu} | u \in \mathcal{N}(v) \}$, where $w'_{vu}$ is the dynamic edge weight defined by the random walk model.
To apply the M-H sampling method, for all $u\in \mathcal{N}(v)$, we set the conditional probability mass function $q(\cdot|u)$ to be uniform distribution, i.e., $q_v(\cdot | u) = 1 / deg(v)$, where $deg(v)$ is the degree of the node $v$. 
Because of the symmetry of the uniform distribution (i.e., $q_v(s|u) = q_v(u|s)$, $s,u\in \mathcal{N}(v)$), the acceptance ratio $\theta$ (Eq.~\ref{eq:orig_acceptance}) for the $i^{\rm th}$ candidate $x^c$ can be reduced to
$\theta = \min \left\{ 1, \frac{w'_{vx^c}}{w'_{vx^{(i - 1)}}} \right\}$.

\begin{algorithm}[t]
    \caption{\small M-H based edge sampler}
    \label{alg:sampler}
    \small
    \begin{algorithmic}[1]
    \Require
        Network $\mathcal{G}$, random walk model $M$, walker state $\mathbf{x}$ \;
    \Ensure 
        The sampled next edge\;
    \State Current node $v \gets \mathbf{x}.getCurrentNode()$\;
    \State Draw an edge $(v,u)$ uniformly from $ \{ (v,u) | u \in \mathcal{N}(v) \} $\;
    \State $w_{vu}'\gets M.\textsc{CalculateWeight}(\mathbf{x}, (v,u))$ \;
    \State $s \leftarrow$ $\textsc{Last}_{\mathbf{x}}$, which is the last sampled node.
    \State $w_{vs}' \gets M.\textsc{CalculateWeight}(\mathbf{x}, (v,s))$ \;
    \State Calculate acceptance ratio $ \theta \gets \min\left\{ 1, \frac{w_{vu}'}{w_{vs}'} \right\} $\;
    \If {rand(0,1) $< \theta$} 
        \State $\textsc{Last}_{\mathbf{x}} \gets u$\; 
    \EndIf
    \State \Return $(v, \textsc{Last}_{\mathbf{x}})$ \;
    \end{algorithmic}
\end{algorithm}

Algorithm~\ref{alg:sampler} gives the procedures of the M-H based edge sampler. \textsc{CalculateWeight} method refers to get the dynamic edge weight based on the current walker state $\mathbf{x}$, while $\textsc{Last}_\mathbf{x}$ refers to the variable preserving the last sampling result of the sampler corresponding to state $\mathbf{x}$. 
If the candidate sample is accepted according to the acceptance ratio, $\textsc{Last}_\mathbf{x}$ is updated. Otherwise, $\textsc{Last}_\mathbf{x}$ is repeatedly used.

By considering the influence of a conditional probability mass function $q(\cdot|u)$ introduced in Section~\ref{subsec:mh_algorithm}, we adopt the uniform distribution for two reasons. First, the uniform distribution is efficient for the calculation of the acceptance ratio and the selection of candidate; Second, uniform distribution is independent of the target distribution which helps us develop a unified NRL framework introduced in Section~\ref{sec:sys}.

\textbf{Complexity Analysis.} 
Since both the time and memory complexity of M-H sampling method is $\mathcal{O}(1)$, the complexity of the edge sampler is dominated by the calculation of the new weights (Lines 3, 5 in Algorithm~\ref{alg:sampler}). 
Taking node2vec as an example, the dynamic edge weight calculation takes $\mathcal{O}(\log (deg))$ in time complexity on average with binary search. 
Given a network $\mathcal{G}(\mathcal{V}, \mathcal{E})$, 
the M-H based edge sampler only entails $\mathcal{O}(\#state)$ memory complexity (e.g., $\mathcal{O}(|\mathcal{E}|)$ in node2vec), since each sampler only maintains one variable $\textsc{Last}_\mathbf{x}$.
In summary, the M-H based edge sampler is both efficient in time and memory aspects, thus supporting to handle large networks. 

\subsection{Convergence of M-H based Edge Sampler} 
In this subsection, we show that our edge sampler which uses uniform distribution as the conditional probability mass function can converge to any discrete target distributions. 

Before presenting the theorem, we introduce a lemma. 

\begin{lemma}
\label{lemma:uniform_dist_bound}
For arbitrary discrete probability distribution $\pi(\cdot)$, whose sample space $\Omega$ has size $n$, then $\pi_{max} \ge \frac{1}{n}$, where $\pi_{max}$ is the maximal probability of $\pi(\cdot)$.

\end{lemma}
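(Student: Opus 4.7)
The plan is to prove the lemma by a short contradiction argument based on the normalization constraint that the probabilities of a discrete distribution must sum to one. The claim is essentially an averaging fact: the maximum of $n$ non-negative numbers whose sum is $1$ must be at least the average $1/n$.

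First I would write $\Omega = \{\omega_1, \omega_2, \ldots, \omega_n\}$ and recall that $\pi$ being a probability distribution on $\Omega$ gives $\sum_{i=1}^{n} \pi(\omega_i) = 1$ with each $\pi(\omega_i) \ge 0$. Next I would suppose for contradiction that $\pi_{\max} < 1/n$, i.e., $\pi(\omega_i) < 1/n$ for every $i$. Summing these $n$ strict inequalities yields
\begin{equation*}
1 = \sum_{i=1}^{n} \pi(\omega_i) < \sum_{i=1}^{n} \frac{1}{n} = 1,
\end{equation*}
which is a contradiction. Hence $\pi_{\max} \ge 1/n$.

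There is essentially no obstacle here; the only thing to be slightly careful about is that the lemma makes no assumption on $\pi$ besides being a discrete distribution over a finite sample space of size $n$, so the argument must not rely on any structural property of $\pi$ other than non-negativity and normalization. The statement is tight, since equality is attained by the uniform distribution on $\Omega$, which I would note briefly as a sanity check. This lemma will be used in the subsequent convergence proof to lower-bound the coefficient $a$ in Theorem~\ref{lemm:convergence} when the conditional proposal $q_v(\cdot|u) = 1/\deg(v)$ is uniform, since $1/\deg(v) \ge \pi_{\min}$ need not hold in general, but $1/\deg(v) \ge a\,\pi(y)$ with a suitable $a$ will follow from an application of this bound.
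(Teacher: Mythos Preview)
Your proof is correct and follows essentially the same contradiction argument as the paper: assume $\pi_{\max} < 1/n$, sum the $n$ strict inequalities to obtain $\sum_{x\in\Omega}\pi(x) < 1$, and contradict normalization. The additional remarks on tightness and on the lemma's role in bounding $a$ for Theorem~\ref{theo:convergence_condition} are accurate but go beyond what the paper states in its proof.
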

\begin{proof}
Assuming that there exists a distribution where $\pi_{max} < \frac{1}{n}$, we have $\pi(x) < \frac{1}{n}, \ for \ all \ x \in \Omega$.
By accumulating all the probabilities, we have $\sum_{x\in \Omega}\pi(x) < 1$, which causes a contradiction. Therefore, the lemma holds.
\end{proof}

\begin{theorem}
    \label{theo:convergence_condition}
    When the M-H based edge sampler uses uniform distribution as the conditional probability mass function, for any discrete target distribution $\pi$, there exists a constant $a\in(0,1)$ such that $q(y | x) \geq a \pi(y) \text { for all } x, y \in \Omega$. 
\end{theorem}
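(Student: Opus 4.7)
The plan is to reduce the required inequality to a single scalar condition on $a$, and then discharge it using Lemma~\ref{lemma:uniform_dist_bound}. Since the M-H based edge sampler sets $q(y|x) = 1/n$ with $n = |\Omega|$ (in Algorithm~\ref{alg:sampler}, $n = \deg(v)$), the left-hand side of the target inequality is independent of both $x$ and $y$. Consequently, the quantified statement ``$q(y|x) \geq a\pi(y)$ for all $x,y \in \Omega$'' collapses to the single scalar requirement $\frac{1}{n} \geq a\,\pi_{max}$, where $\pi_{max} = \max_{y \in \Omega} \pi(y)$. Equivalently, I need to exhibit some $a \in (0,1)$ with $a \leq \frac{1}{n\,\pi_{max}}$.

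Next I would invoke Lemma~\ref{lemma:uniform_dist_bound}, which guarantees $\pi_{max} \geq \frac{1}{n}$ for any discrete distribution on a set of size $n$. This yields $n\,\pi_{max} \geq 1$, hence $\frac{1}{n\,\pi_{max}} \leq 1$; on the other side, since $\pi_{max} \leq 1$ we also have $\frac{1}{n\,\pi_{max}} \geq \frac{1}{n} > 0$. The natural candidate $a := \frac{1}{n\,\pi_{max}}$ therefore already satisfies the desired inequality and lies in the half-open interval $(0,1]$.

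The only subtlety, and what I would flag as the main (though minor) obstacle, is the boundary case $\pi_{max} = 1/n$, which occurs precisely when $\pi$ is itself uniform on $\Omega$. In that case the candidate $a = \frac{1}{n\,\pi_{max}} = 1$ fails to lie in the \emph{open} interval $(0,1)$ demanded by the statement. I would resolve this by appealing to the monotonicity of the condition in $a$: if $q(y|x) \geq a_0\,\pi(y)$ holds for some $a_0$, then it holds for every $a \in (0, a_0]$. Setting, for instance, $a := \min\!\bigl\{\tfrac{1}{n\,\pi_{max}},\,\tfrac{1}{2}\bigr\}$ yields a value strictly inside $(0,1)$ for which $q(y|x) = \frac{1}{n} \geq a\,\pi_{max} \geq a\,\pi(y)$ holds uniformly in $y$, completing the argument. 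Together with Theorem~\ref{lemm:convergence}, this immediately gives the geometric convergence of the sampler's induced Markov chain to any prescribed $\pi$.
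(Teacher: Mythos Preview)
Your argument is correct and mirrors the paper's proof: both reduce the quantified inequality to the scalar condition $a \le \frac{1}{n\,\pi_{\max}}$, set $a = \frac{1}{n\,\pi_{\max}}$, and invoke Lemma~\ref{lemma:uniform_dist_bound} to bound this value. The only difference is cosmetic: where you shrink $a$ (e.g.\ to $\min\{\frac{1}{n\pi_{\max}},\frac{1}{2}\}$) to force strict membership in $(0,1)$, the paper simply takes $a = \frac{1}{n\pi_{\max}}$ and dispatches the degenerate case $a=1$ (uniform $\pi$) in a remark after the proof by noting $\mathrm{KL}(\pi^i,\pi)=0$ trivially.
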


\begin{proof}
    The uniform distribution in edge samplers means
    \begin{equation}
        q(y|x) = \frac{1}{deg} \ for \ all \ x,y \in \Omega,
    \end{equation}
    where $deg$ denotes the degree of the current residing node. 
    To satisfy the constraint $q(y | x) \geq a \pi(y)$, $a$ must be
    \begin{equation}
        a \le \frac{1}{deg\cdot\pi(y)} \ for \ all \ y \in \Omega.
    \end{equation}
    Then we can set
    \begin{equation}
        a = \frac{1}{deg \cdot \pi_{\mathrm{max}}},
    \end{equation}
    and on the basis of Lemma~\ref{lemma:uniform_dist_bound}, we have $a\in(0,1)$.
\end{proof}

Theorem~\ref{lemm:convergence} and \ref{theo:convergence_condition} guarantee the correctness of our M-H based edge sampler for any user defined random walk model over networks. Note that we ignore the case $a=1$ in Theorem~\ref{theo:convergence_condition}. This is because the target distribution is also a uniform distribution when $a=1$, and then the divergence $\mathrm{KL}(\pi^{i}, \pi)=0$ directly.

\subsection{Initialization of M-H based Edge Sampler}
\label{subsec:init}
{As introduced in Section~\ref{subsec:mh_algorithm}, to initialize an M-H sampling method, the common approach -- \textbf{burn-in} strategy needs to discard a certain number of initial samples.  
With regard to the M-H based edge sampler, the burn-in strategy incurs an expensive cost for the initialization of $\#state$ edge samplers over a network.
In the following, we introduce two new efficient initialization strategies.}

\subsubsection*{\textbf{Random Initialization}} To avoid the expensive initialization cost, a basic solution is to draw an initial sample $x^{(1)}$ from a uniform distribution, i.e., $\pi^0(\cdot) = 1/n$, called \textit{random initialization} strategy. 
It is clear that this initialization strategy guarantees efficiency (i.e., time complexity $\mathcal{O}(1)$), even when a large number of samplers are in need of initialization. 
However, the random initialization cannot guarantee the convergence of the sampler in the beginning, thus may lead to the issue of imprecision of the statistics of the generated random walks.

\subsubsection*{\textbf{High-weight Initialization}} Recall the burn-in strategy, the objective of discarding some initial samples is to make sure the samples enter a high probability region, a place where the states of the Markov chain are more representative of the distribution we are sampling from. 
This is because in practice we always stop after some finite $N$ iterations. If $N$ is small and the generated samples are in a low probability region, the distribution of the samples is a poor approximation to the target distribution. 

With such observation, we propose a new initialization strategy, called \textit{high-weight initialization}.
Specifically, when initializing a sampler, it selects the edge with maximum weight as the initial sample, which possesses the maximal probability (i.e., high probability region) in the target distribution. To get the exact maximal weight, for each state, we need to traverse all neighbor edges, and this entails $\mathcal{O}(deg)$ time complexity.
To speed up the selection, we turn to get an approximate maximal weight by uniformly sampling a portion of neighbors and reserving the edge with the maximal weight. 
The law of large numbers~\cite{grimmett2001probability} guarantees that we can obtain the exact maximal probability when
the sample size is large enough.

\begin{figure*}[t]
     \centering
     \subfigure[$n=10$]{
     \label{mi-b1}
     \begin{minipage}[t]{0.25\linewidth}
     \centering
     \includegraphics[width=1.8in]{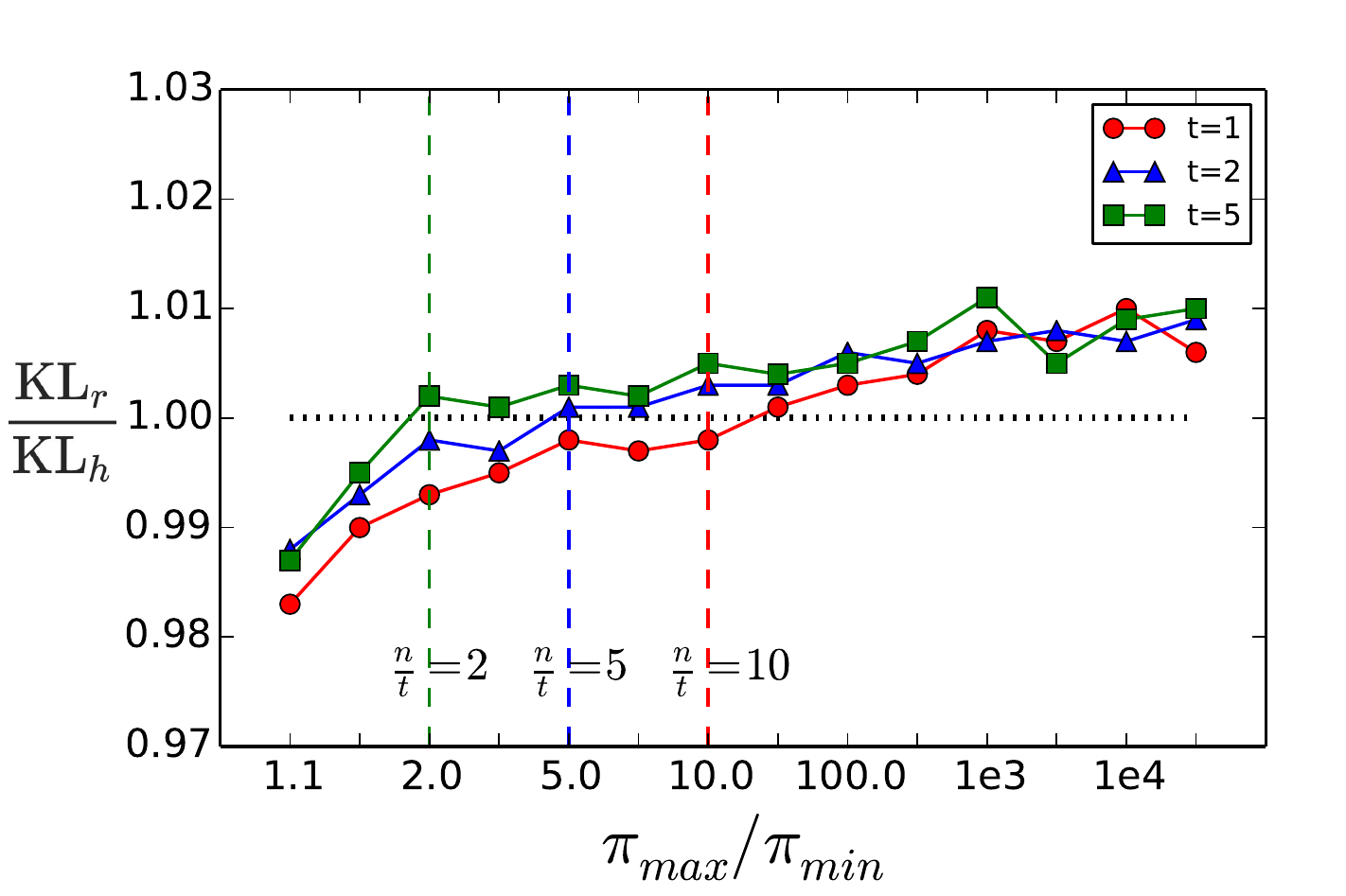}
     \end{minipage}%
     }%
     \subfigure[$n=100$]{
     \label{mi-b2}
     \begin{minipage}[t]{0.23\linewidth}
     \centering
     \includegraphics[width=1.8in]{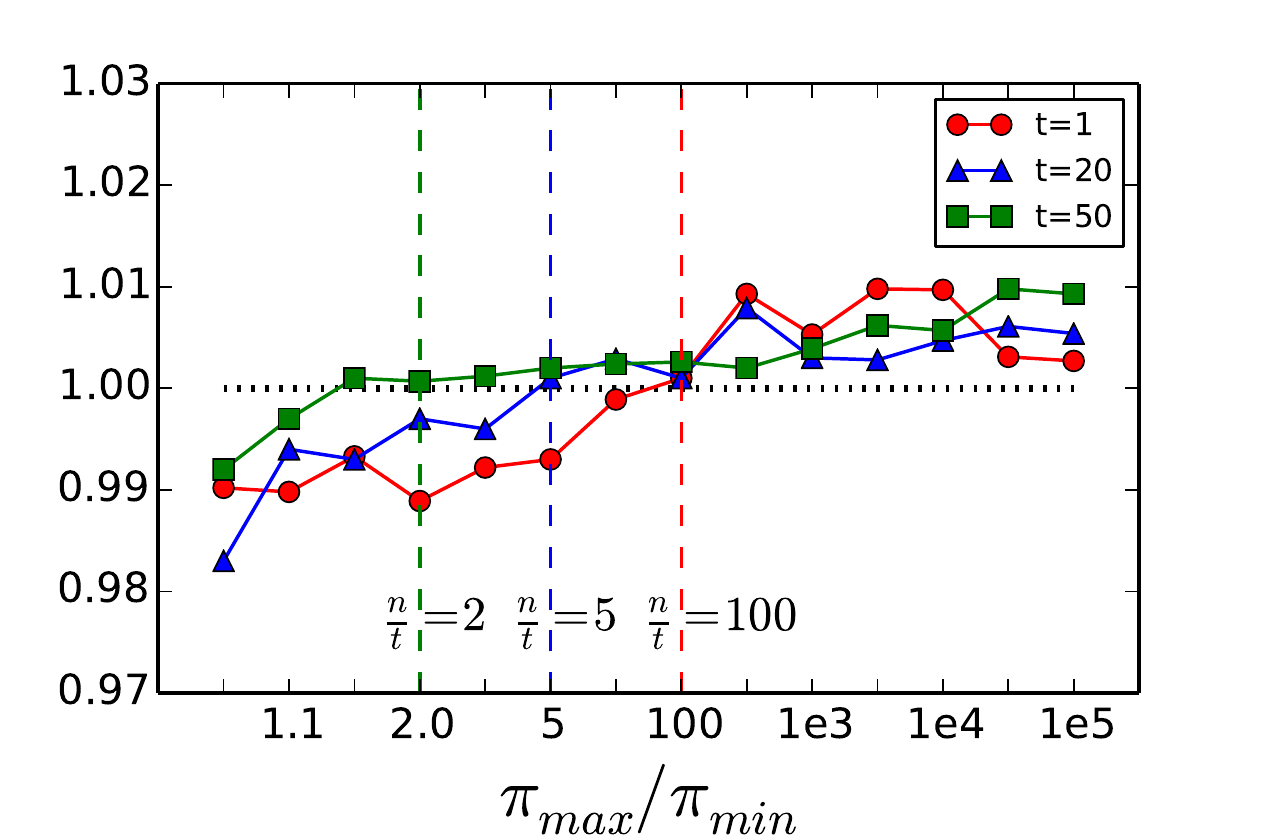}
     \end{minipage}%
     }%
    \subfigure[$n=1000$]{
    \scalebox{0.35}{
    \includegraphics{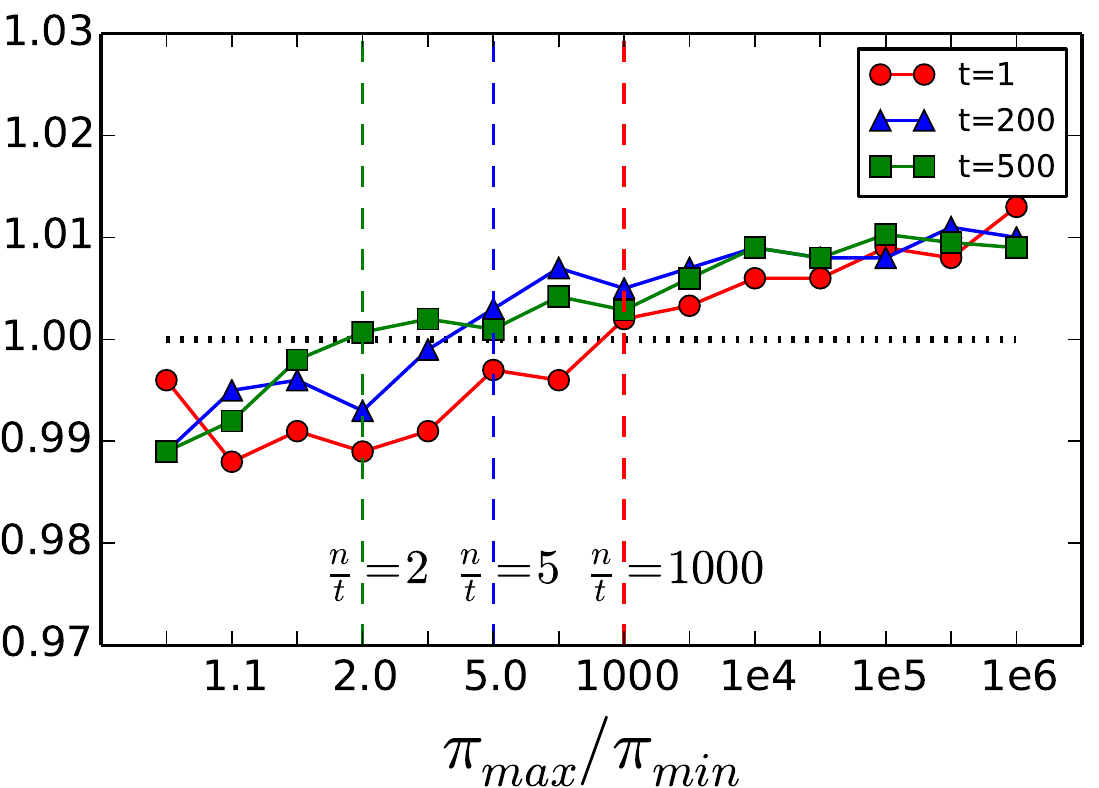}
    }
    }%
    \subfigure[$n=10000$]{
    \scalebox{0.35}{
        \includegraphics{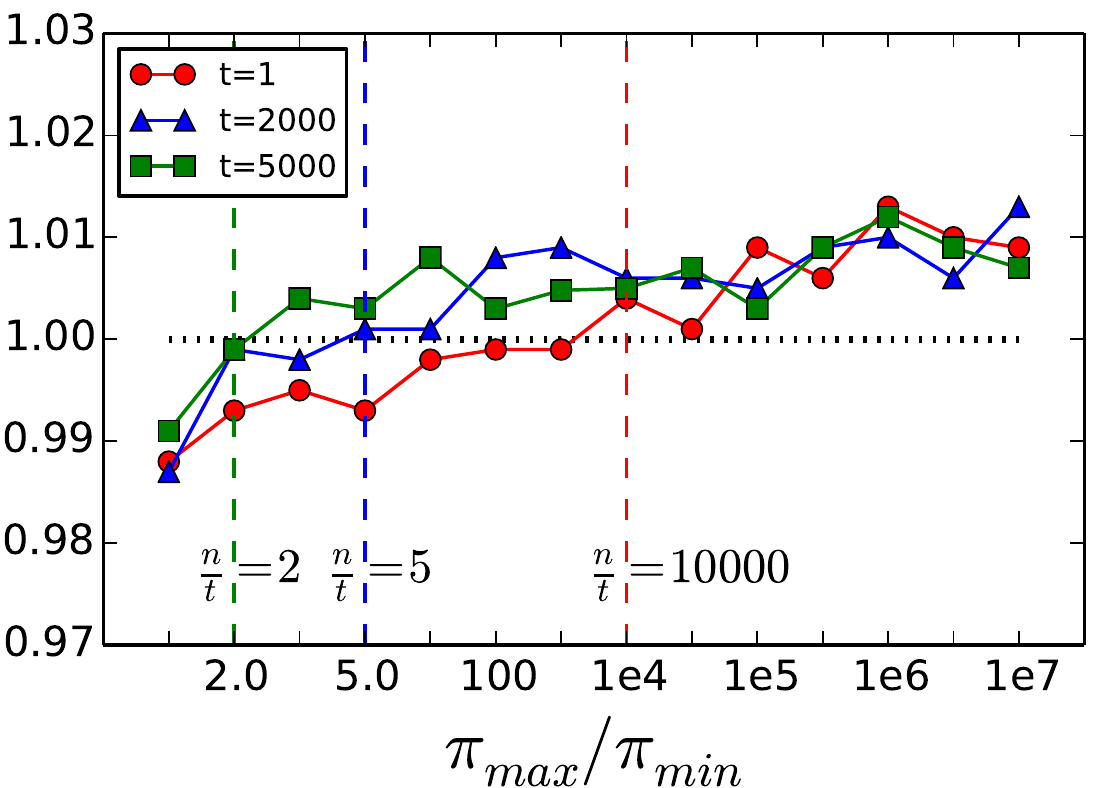}
    }
    }%
    \caption{\small The ratio of KL divergences with random initialization and high-weight initialization. $\mathrm{KL}_r$ ($\mathrm{KL}_h$) is the KL divergence between the target distribution and the sampled distribution with random (high-weight) initialization.  \label{fig:sampler_init_expr}}
    \end{figure*}
\textbf{Theoretical Trade-off Analysis.}
According to Theorem~\ref{lemm:convergence}, the convergence speed is not only affected by $\rho$, but also influenced by the difference between initial probability distribution $\pi^0$ and the target distribution $\pi$, i.e., the coefficient $\kappa$ in Eq. \ref{eq:kappa}. 
Hence, in terms of the convergence speed, random initialization and high-weight initialization may surpass each other with respect to different target distributions.

The following theorem gives the condition when high-weight initialization works better than random initialization.
\begin{theorem}
    \label{theo:high_prob}
    The high-weight initialization strategy leads to a faster convergence than the random one, 
    when the target distribution $\pi$ satisfies the following conditions:
    
    \begin{equation} \label{eq:init_cond}
        \left\{
            \begin{array}{l}
                \pi_{\mathrm{max}} < \frac{1}{2t} \\
                \frac{\pi_{\mathrm{max}}}{\pi_{\mathrm{min}}} > \frac{n}{t} 
            \end{array}
        \right.
        or \
        \left\{
            \begin{array}{l}
                \pi_{\mathrm{max}} \ge \frac{1}{2t} \\
                \pi_{\mathrm{min}} < \frac{1}{2n} 
            \end{array}
        \right. 
    \end{equation}
    where $n$ is the size of sample space, $\pi_{max}$ and $\pi_{min}$ are the maximal and minimal probability in distribution $\pi$,
    $t$ is the number of elements whose probability are $\pi_{max}$.
\end{theorem}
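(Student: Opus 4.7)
The plan is to invoke Theorem~\ref{lemm:convergence} directly. Both strategies drive the same M-H chain with the same uniform conditional $q(\cdot\mid\cdot)$, so the constant $\rho=1-a$ in Eq.~\ref{eq:kappa} is identical for both. The comparison therefore reduces entirely to the coefficient $\kappa=\|\pi^0/\pi - 1\|_\infty$, which enters the bound in Eq.~\ref{eq:lem} monotonically at every iteration $i$. It suffices to show that the stated condition is equivalent to $\kappa_h < \kappa_r$.

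I would first compute both coefficients explicitly. For random initialization, $\pi^0\equiv 1/n$; using $\pi_{\mathrm{min}}\le 1/n$ (since $\sum_x \pi(x)=1$ averages to $1/n$) and $\pi_{\mathrm{max}}\ge 1/n$ (Lemma~\ref{lemma:uniform_dist_bound}), the supremum of $|1/(n\pi(x))-1|$ is attained at an extreme element, giving
\[
\kappa_r = \max\left\{\frac{1}{n\pi_{\mathrm{min}}}-1,\;1-\frac{1}{n\pi_{\mathrm{max}}}\right\}.
\]
For high-weight initialization, I would model $\pi^0$ as uniform over the $t$ maximum-weight elements. On those elements $\pi^0/\pi-1 = 1/(t\pi_{\mathrm{max}})-1\ge 0$ (since $t\pi_{\mathrm{max}}\le 1$), while on all other elements it equals $-1$. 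Hence
\[
\kappa_h = \max\left\{1,\;\frac{1}{t\pi_{\mathrm{max}}}-1\right\}.
\]

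The last step is a two-way case split on which branch realizes $\kappa_h$. If $\pi_{\mathrm{max}}\ge 1/(2t)$ then $\kappa_h=1$, so $\kappa_h<\kappa_r$ reduces to $\kappa_r>1$, which by the formula for $\kappa_r$ is equivalent to $\pi_{\mathrm{min}}<1/(2n)$ --- exactly the second alternative of Eq.~\ref{eq:init_cond}. If $\pi_{\mathrm{max}}<1/(2t)$, then $\kappa_h=1/(t\pi_{\mathrm{max}})-1$, and $\kappa_h<\kappa_r$ forces $1/(n\pi_{\mathrm{min}})-1>1/(t\pi_{\mathrm{max}})-1$, which rearranges to $\pi_{\mathrm{max}}/\pi_{\mathrm{min}}>n/t$ --- the first alternative.

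The main subtleties I expect are (a) modeling $\pi^0$ cleanly for the high-weight strategy in the presence of ties (uniform over the $t$ maximizers is the natural choice, and any deterministic tie-breaking gives the same $\kappa$), and (b) being explicit that Theorem~\ref{lemm:convergence} provides only an upper bound on the KL divergence, so ``faster convergence'' here should be read as a strictly tighter envelope in Eq.~\ref{eq:lem} rather than strictly smaller true KL. Both are easy to handle; beyond this, the argument is elementary algebra and I do not foresee a deeper obstacle.
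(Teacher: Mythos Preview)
Your proposal is correct and follows essentially the same route as the paper: compute $\kappa_h=\max\{1,\,1/(t\pi_{\max})-1\}$ and $\kappa_r=\max\{1/(n\pi_{\min})-1,\,1-1/(n\pi_{\max})\}$, then split on which branch realizes $\kappa_h$. Your two-case analysis is in fact a slight streamlining of the paper's four sub-cases (you use $1-1/(n\pi_{\max})<1$ up front to collapse two of them), and your caveat~(b) about Eq.~\ref{eq:lem} being only an upper bound is a point the paper leaves implicit.
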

\begin{proof}
The complete proof are presented in Appendix~\ref{app:proof}.
\end{proof}

\textbf{Empirical Comparison.} We conduct a set of simulations to empirically compare the random and high-weight initialization strategies. 
We first randomly generate 1000 target distributions for each configuration, 
which contains the sample space size $n$, the number of elements $t$ who have the maximal probability and the ratio $\frac{\pi_{\mathrm{max}}}{\pi_{\mathrm{min}}}$. 
For each target distribution, M-H based edge sampler with one initialization strategy conducts a sampling process by generating $5n$ samples and obtains an approximated distribution. This process is repeated 20 times.
After all the 1000 target distributions simulated by the M-H based edge sampler, we calculate the averaged KL divergence between the approximated distribution and the target distribution. 

Fig.~\ref{fig:sampler_init_expr} presents the ratio of KL divergences of samplers adopting random initialization and high-weight initialization, i.e., $\frac{\mathrm{KL}_r}{\mathrm{KL}_h}$. If the ratio exceeds 1, the edge sampler with the high-weight initialization strategy generates samples more accurately, otherwise, the random initialization outperforms the high-weight initialization. From the figure, we clearly see that the empirical results are consistent with Theorem~\ref{theo:high_prob}. For example, when $n=1000$ and $t=200$, the ratio of KL divergence equals one when $\frac{\pi_{\mathrm{max}}}{\pi_{\mathrm{min}}} \approx \frac{n}{t}=5$. Meanwhile, when the ratio $\frac{\pi_{\mathrm{max}}}{\pi_{\mathrm{min}}} > \frac{n}{t}$, which entails a skew distribution, the accuracy of high-weight initialization trumps random initialization, which appears in the figure as the ratio exceeds 1.

\section{\sys: A Unified Framework for Scalable Network Representation Learning}
\label{sec:sys}
In this section, we introduce \sys, which is a unified random walk based NRL framework for scalable training.
\sys relies on the M-H based edge sampler to improve the efficiency of model training. 

\subsection{Overview of \sys}
Figure~\ref{fig:architecture} illustrates the overview of \sys. 
Given a network $\mathcal{G}$, \sys loads it into memory and generates random walks according to the NRL models. 
In \sys, we generalize the process of generating random walks as the life cycle of a \textit{walker}. {All the M-H based edge samplers are maintained in the sampler manager, and each edge sampler executes its initialization strategy only when it is first called.
The overall procedure of random walk generation in \sys is shown in Algorithm~\ref{alg:walks}. 
An instance of walker $\lambda$ is created whenever a walking procedure initiates by calling the function $\textsc{GetWalker}$ in Line 5. 
For each iteration of walking step (Lines 8-13), the walker $\lambda$ first uses the function $\textsc{WalkerStates}$ to get its state $\mathbf{x}$, then queries the edge sampler from the sampler manager by the state $\mathbf{x}$ via the function $\textsc{QuerySampler}$.
After the sampling result is calculated through Algorithm~\ref{alg:sampler}, the walker state is updated using the function $\textsc{UpdateState}$.}
The walking process will not stop until the walker reaches the maximal steps $L$ defined by users.
In addition, walkers starting from different initial nodes are able to run simultaneously in \sys because of the independence between walkers. 
Indeed, \sys parallelizes the random walk generation by assigning walkers to threads evenly. 
In the learning phase, all the generated walks are fed into the trainer module as the training corpus, and \sys executes SGD optimization over Word2Vec models. This phase can be parallelized with the techniques~\cite{8663393} proposed in ML community as well.

\begin{figure}[t]
    \centering
    \includegraphics[width=0.42\textwidth]{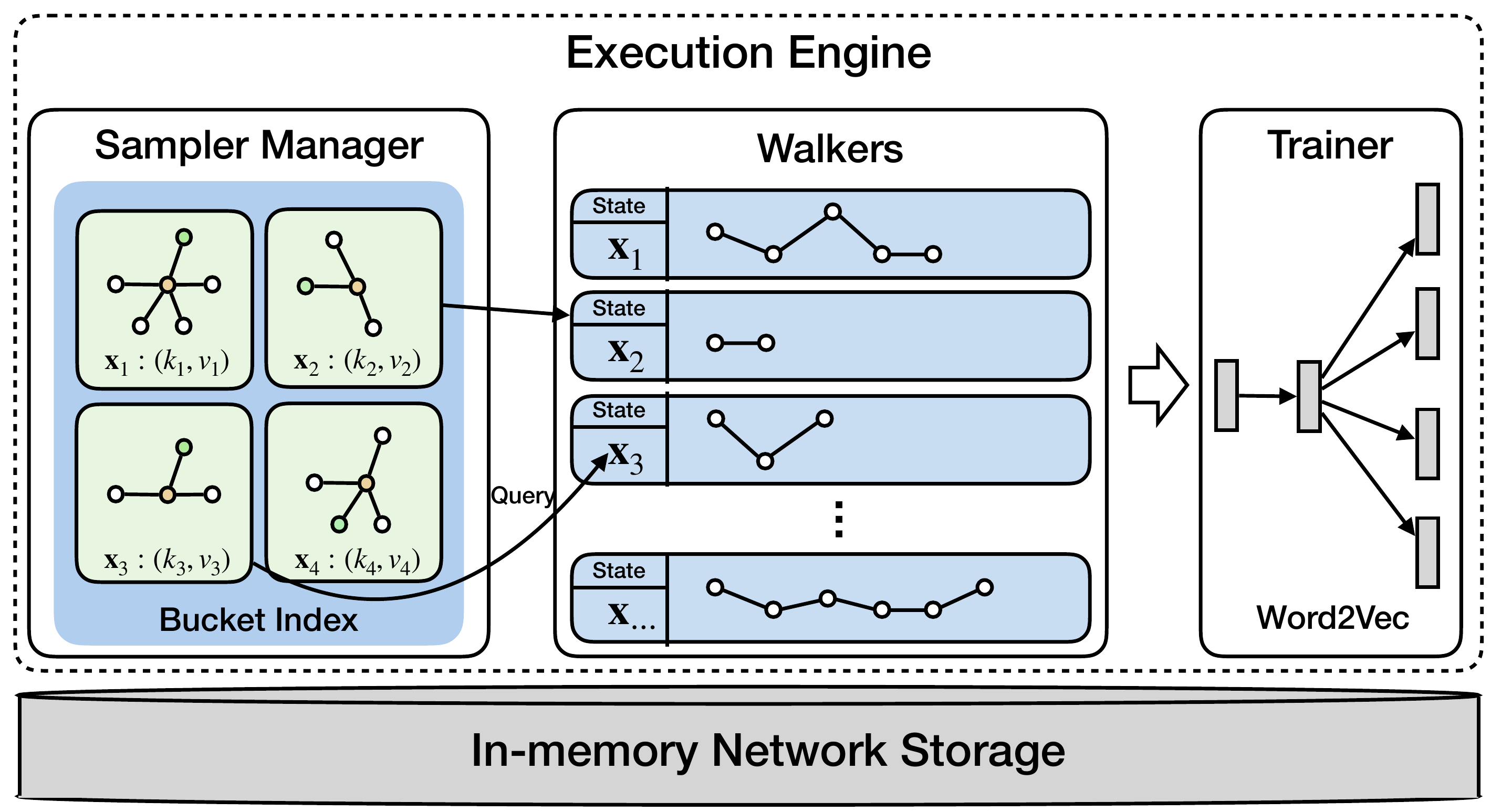}
    \caption{The overview of \sys.}
    \label{fig:architecture}
\end{figure}

\begin{algorithm}[t]
    \caption{\small The procedures of random walk generation}
    \label{alg:walks}
    \small
    \begin{algorithmic}[1]
    \Require
        Network $\mathcal{G}(\mathcal{V,E})$, random walk model $M$, walk length $L$, number of walks $K$
    \Ensure 
        Random walk sequences $W$
    \State \# parallel
    \For{$v$ in $V$}
    \State \# parallel
    \For{$k$ in 1 $\cdots K$}
        \State $\lambda \gets \textsc{GetWalker}(M, v)$
        \State append $v$ to $S$
        \For{$step$ in 1 $\cdots$ $L$}
            \State {$\mathbf{x} \gets \textsc{WalkerStates}(\lambda)$}
            \State {$edgeSampler \gets \textsc{QuerySampler}(\mathbf{x})$ }
            \State {{\# Alg.~\ref{alg:sampler} defines the sampling logic.}}
            \State {{$next \gets edgeSampler.\textsc{Sampling}(\mathcal{G}, M, \mathbf{x})$}}
            \State {$\mathbf{x} \gets \textsc{UpdateState}(\mathbf{x}, next)$}
            \State append $next$ to $S$
        
        \EndFor
        \State insert $S$ into $W$
    \EndFor
    \EndFor
    \State \Return $W$
    \end{algorithmic}
\end{algorithm}
\subsection{Unified Random Walk Model Abstraction}
To make \sys easy for users to program various random walk based NRL models, 
we introduce a unified random walk model abstraction on top of the M-H based edge sampler. 

The difference among random walk models lies in the transition probability. 
Recall that the transition probability corresponding to a state $\mathbf{x}$ can be expressed as a normalized form:
\begin{equation}
    \mathcal{G}_{\mathbf{x}}(u) = \frac{w'_{\mathbf{x}u}}{\sum_{k \in \mathcal{N}(v)} w'_{\mathbf{x}k}},
    \label{eq:general_weight_model}
\end{equation}
where $w'$ is the \textit{dynamic edge weight}, which actually determines the distribution and is related to the state $\mathbf{x}$.
In other words, each state is associated with a specific transition probability distribution. 
As a consequence, to define a random walk, we only need to specify the state and the corresponding dynamic edge weight.
Table~\ref{tab:def} lists the dynamic edge weight $w'$ and the state definition for the previously introduced five NRL models. 

\begin{table}[t]
\centering
    \centering
    \caption{State definition and dynamic edge weights. \label{tab:def}}
    \scalebox{1}{
      \begin{tabular}{c|c|c}
      \hline
      Model    & State $\mathbf{x}$  & Dynamic Weight of Edge $(v,u)$ \\ \hline
      deepwalk & $v$              & $w_{vu}$         \\ \hline
      node2vec & $(s,v)$          & $\alpha w_{vu}$          \\ 
      \hline
      edge2vec & $(s,v)$          & $\alpha \mathbf{M}_{(s,v),(v,u)} w_{vu}$   \\ 
      \hline
      fairwalk & $(s,v)$                 &  $ \frac{\alpha w_{vu}}{|K|}$, ($k \in K \Leftrightarrow \Phi(k) = \Phi(u)$)  \\ 
      \hline
      metapath2vec& $(T,v)$ &  
      $\left\{
        \begin{aligned}  
        w_{vu}&, \ if \ \Phi(u) = T \\
        0&, \ otherwise
        \end{aligned}
        \right.$\\ 
      \hline
      \end{tabular}%
      }
\end{table}

Benefiting from the M-H based edge sampler, which is able to directly sample the unnormalized probability distributions, 
the unified abstraction above can be efficiently supported in \sys over large networks by computing the dynamic edge weight online.
{
We provide two key programming interfaces in \sys to realize the unified abstraction, 
named \textsc{CalculateWeight} and \textsc{UpdateState}. 
With \textsc{CalculateWeight}, the dynamic edge weight can be computed based on the current state and the next candidate edge, and \textsc{UpdateState} specifies the update logic of states given the next edge. 
Therefore, to define a random walk based NRL model in \sys, users only need to implement the above two interfaces. Figure \ref{fig:api} gives the implementation for node2vec following the interfaces provided by \sys.
}

\subsection{Implementation Details}

\textbf{Network Storage.} \ In order to satisfy the requirement of storing large networks in memory, 
we utilize compressed sparse row (CSR) as the basic format of data storage, which consists of a node list and an edge list for an unweighted network. 
For each node, CSR keeps the offset in the edge list corresponding to its neighboring edges. 
CSR is also efficient for the storage of weighted networks by allocating an additional weight value for each entry in the edge list. 
In addition to that, to store the heterogeneous networks, we keep the types of each node by allocating an extra array of size $|\mathcal{V}|$.

\begin{figure}[t]
    \centering
    \includegraphics[width=0.5\textwidth]{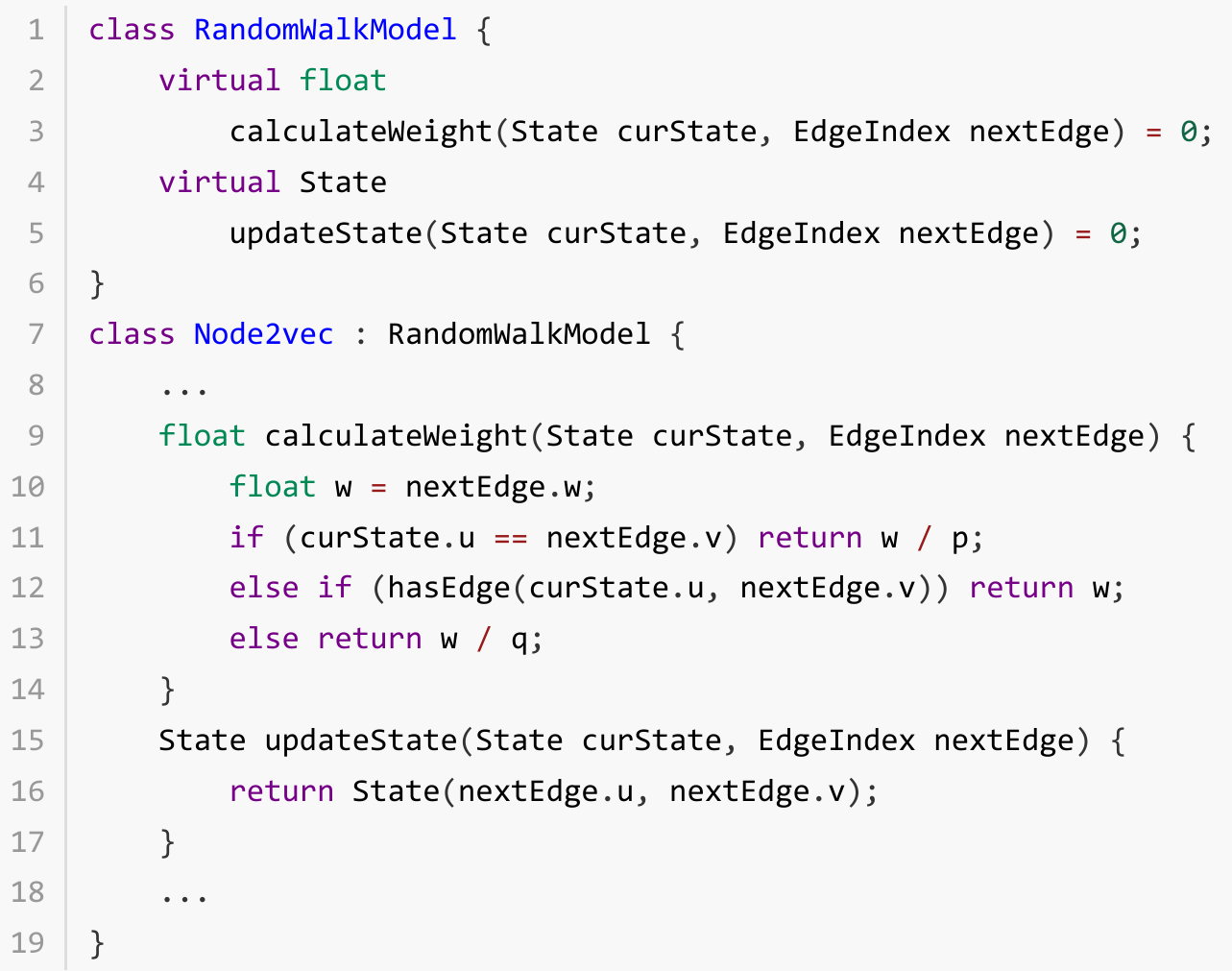}
    \caption{Programming interfaces for defining a random walk model and the example implementation for node2vec.}
    \label{fig:api}
\end{figure}
\begin{figure}[t]
    \centering
    \includegraphics[width=0.32\textwidth]{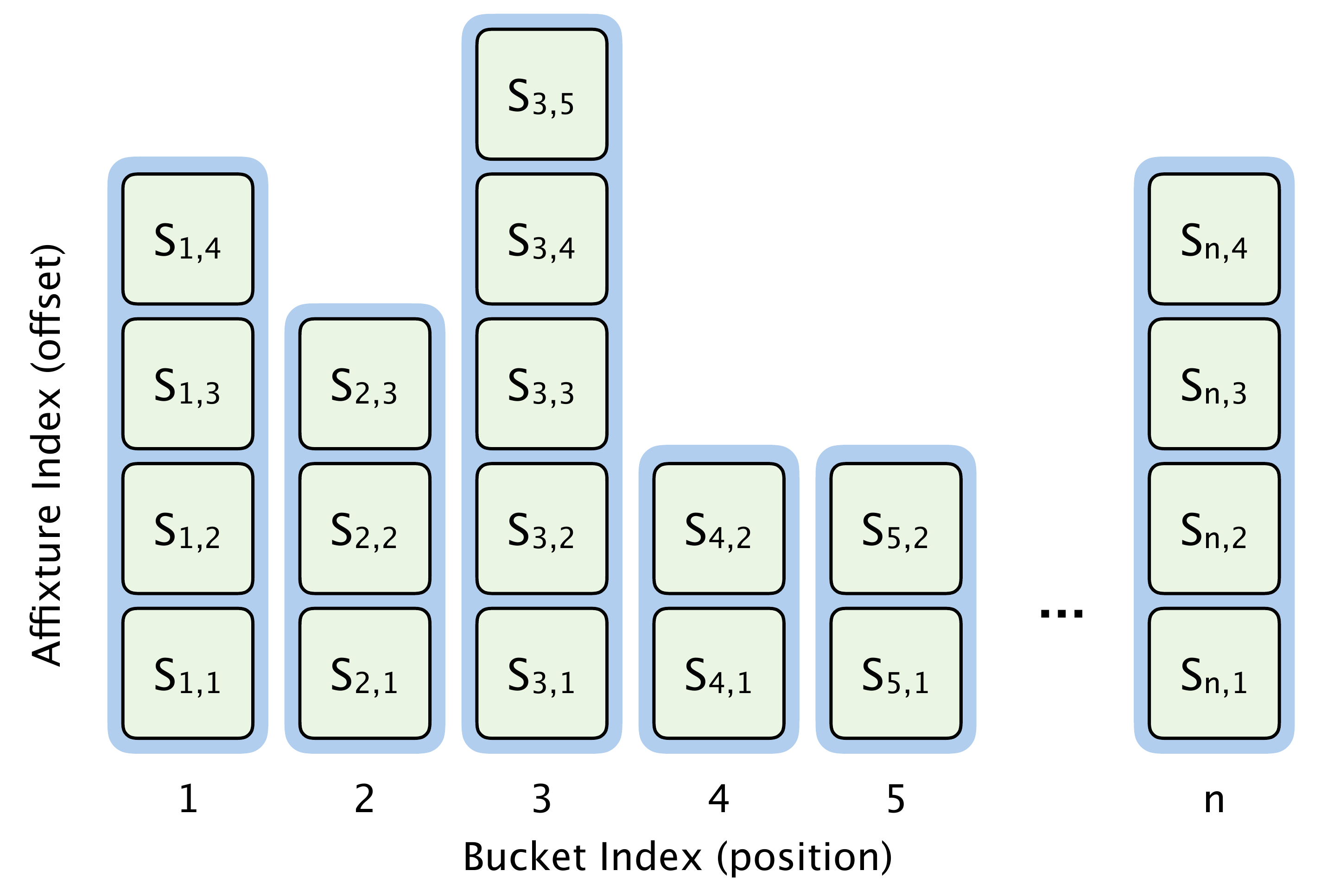}
    \caption{2D data layout for the samplers in node2vec.}
    \label{fig:2d_data_layout}
\end{figure}
\begin{figure*}[t]
    \centering
    \subfigure[BlogCatalog Micro-F1 Score]{
     \begin{minipage}[t]{0.24\linewidth}
    \centering
    \includegraphics[width=1.9in]{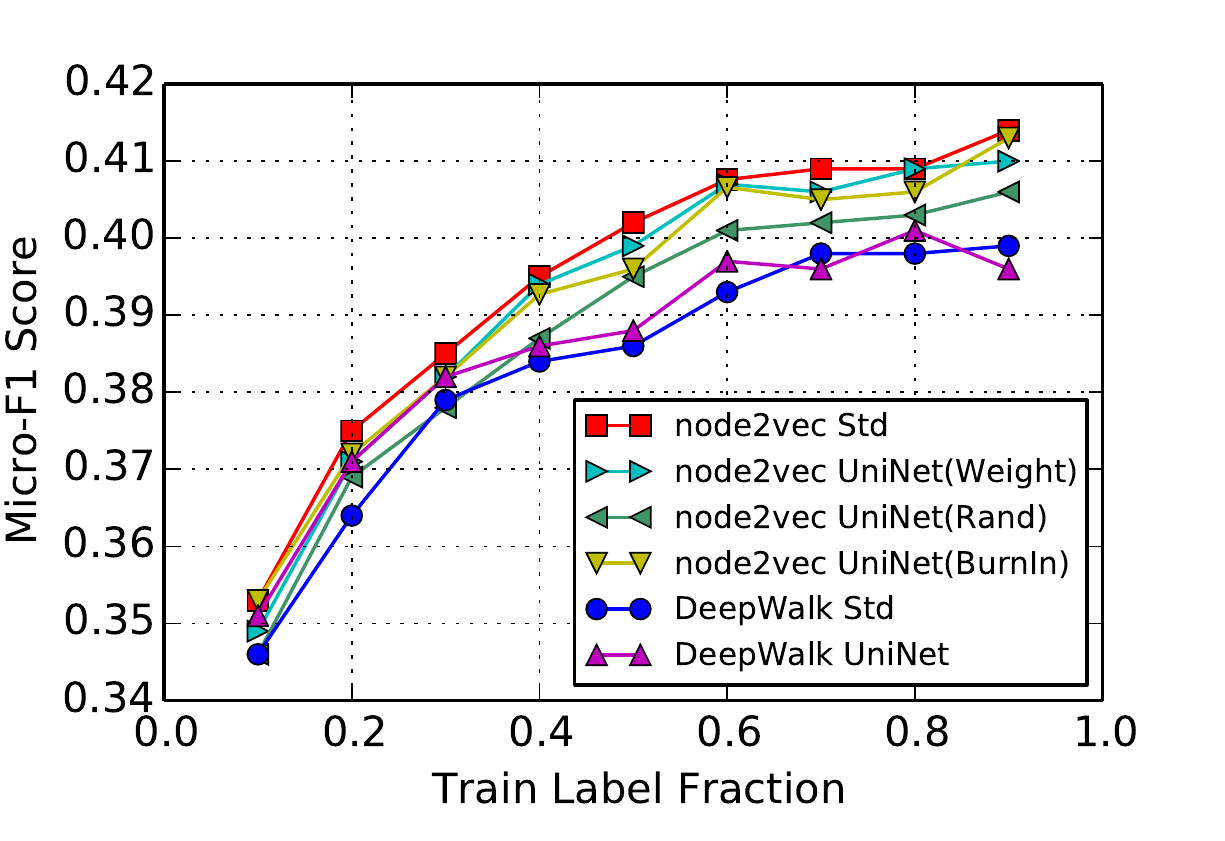}
    \label{subfig:bc_micro}
     \end{minipage}%
    }%
    \subfigure[Flickr Micro-F1 Score]{
     \begin{minipage}[t]{0.24\linewidth}
    \centering
    \includegraphics[width=1.9in]{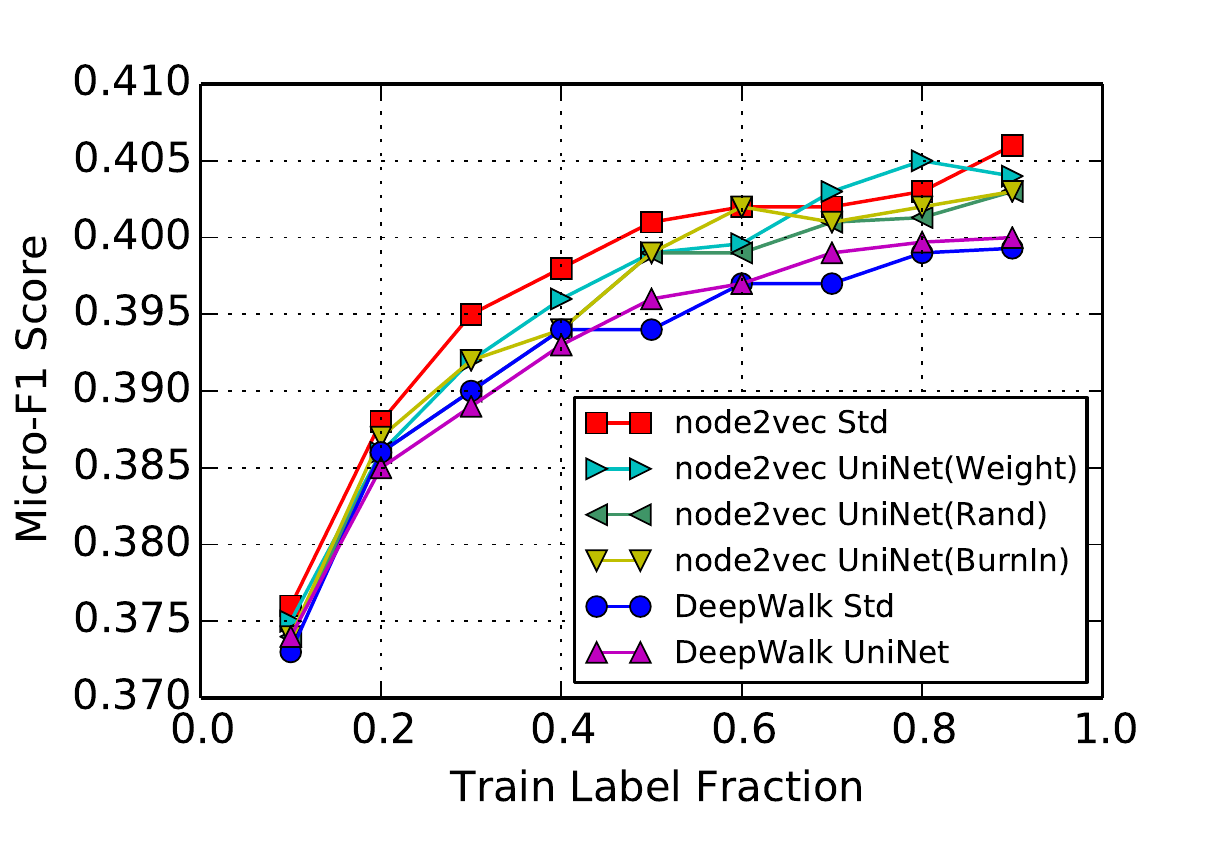}
    \label{subfig:flickr_micro}
     \end{minipage}%
    }%
    \subfigure[{Reddit Micro-F1 Score}]{
     \begin{minipage}[t]{0.24\linewidth}
    \centering
    \includegraphics[width=1.9in]{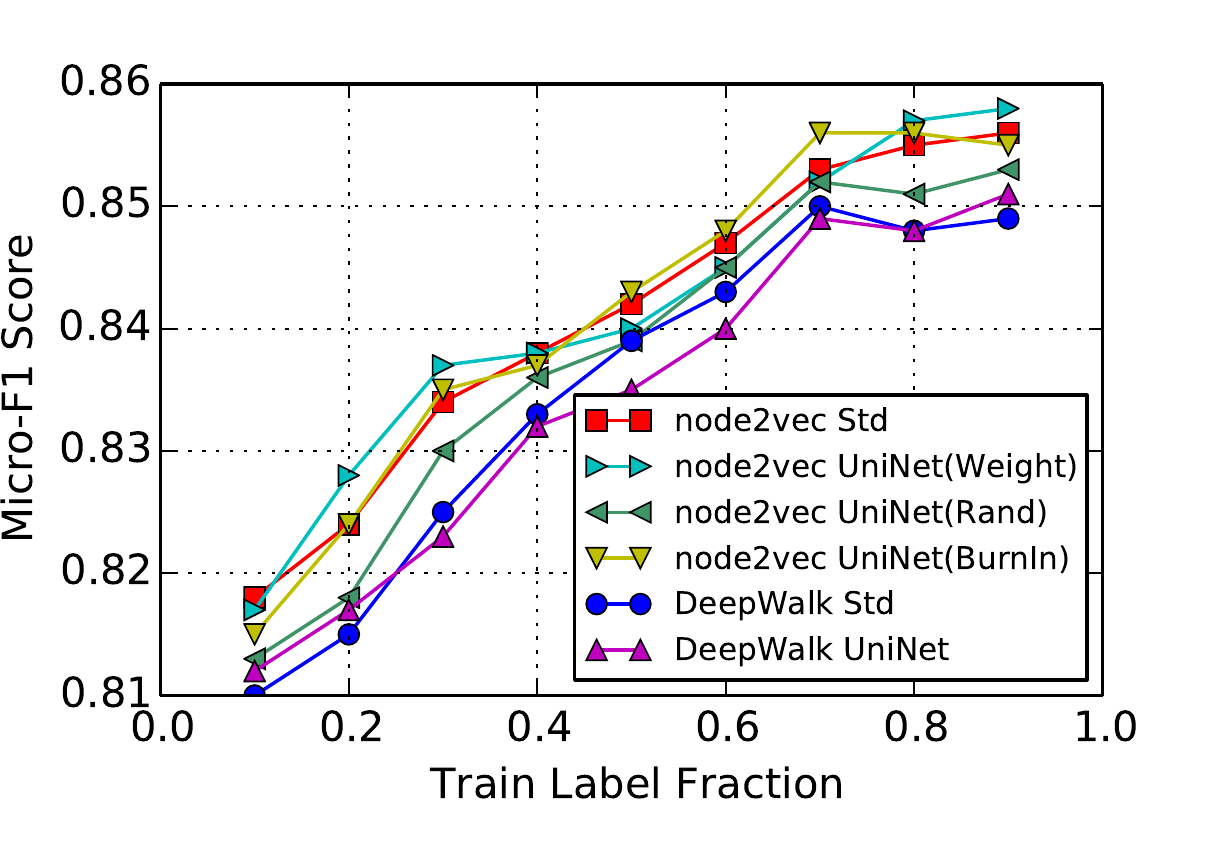}
    \label{subfig:reddit_micro}
     \end{minipage}%
    }%
    \subfigure[Aminer Micro-F1 Score]{
     \begin{minipage}[t]{0.24\linewidth}
    \centering
    \includegraphics[width=1.9in]{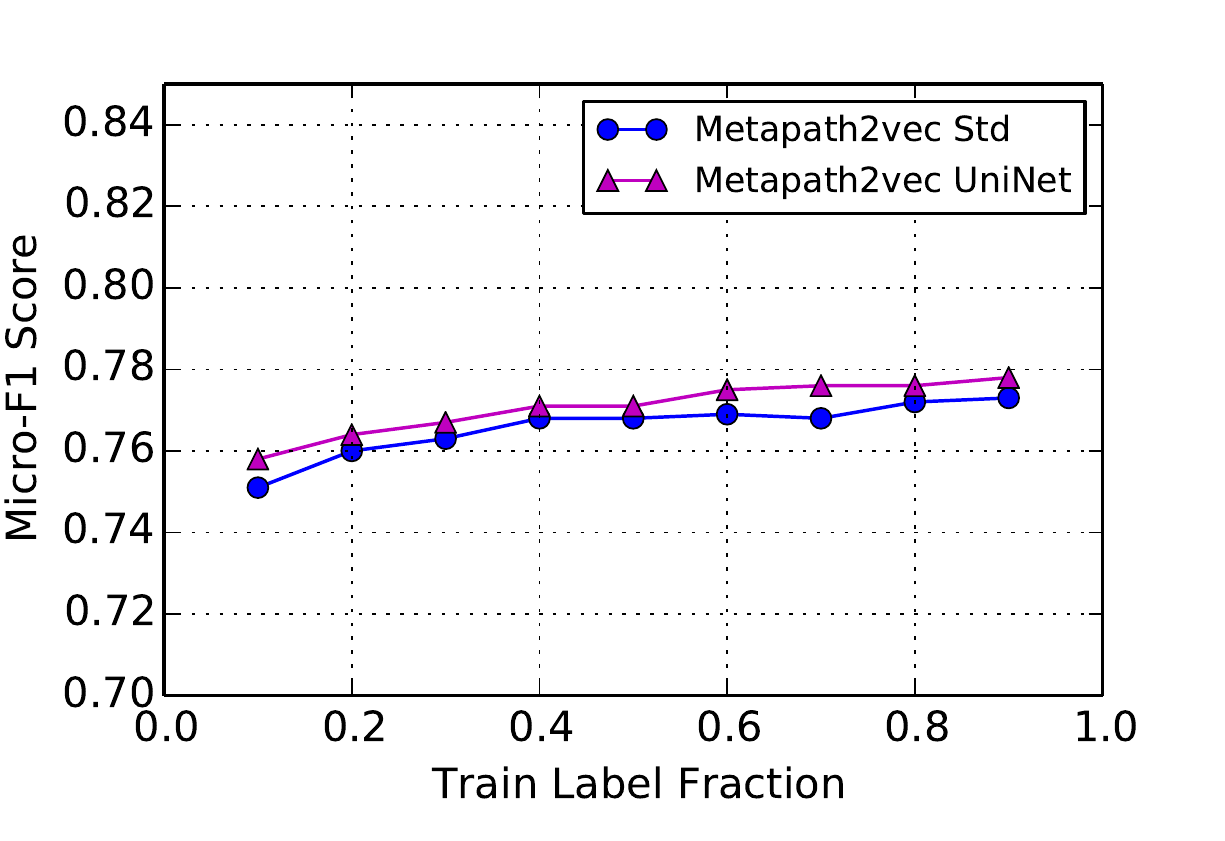}
    \label{subfig:aminer_micro}
     \end{minipage}%
    }%
    
    \subfigure[BlogCatalog Macro-F1 Score]{
     \begin{minipage}[t]{0.24\linewidth}
    \centering
    \includegraphics[width=1.9in]{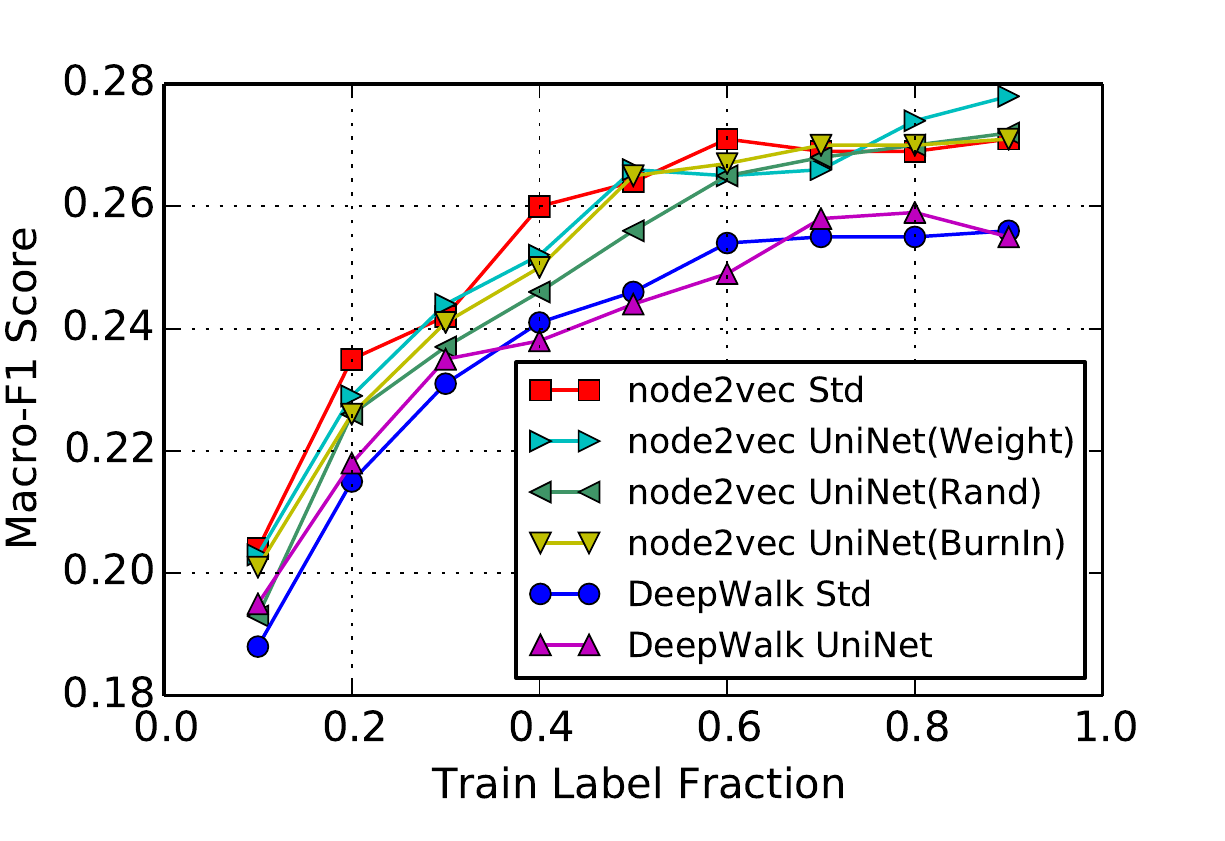}
    \label{subfig:bc_macro}
     \end{minipage}%
    }%
    \subfigure[Flickr Macro-F1 Score]{
     \begin{minipage}[t]{0.24\linewidth}
    \centering
    \includegraphics[width=1.9in]{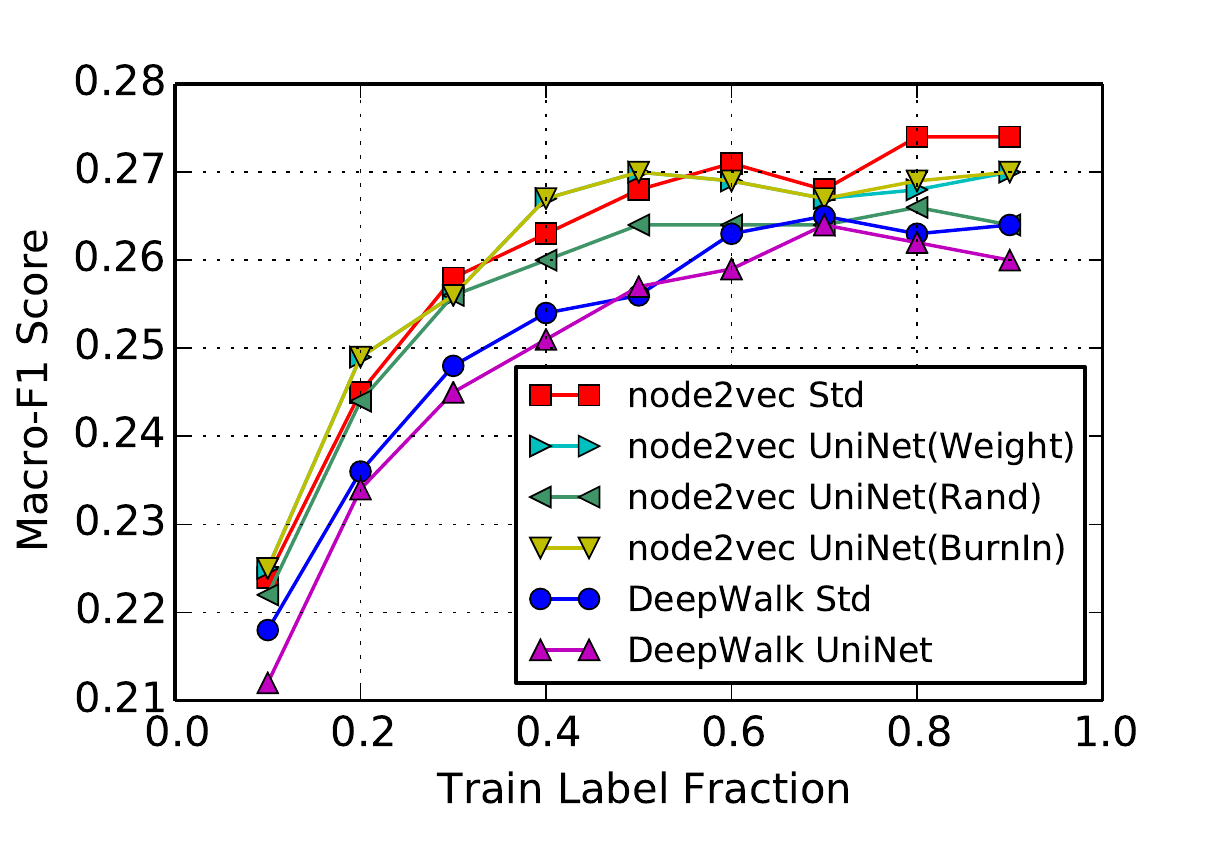}
    \label{subfig:flickr_macro}
     \end{minipage}%
    }%
    \subfigure[{Reddit Macro-F1 Score}]{
     \begin{minipage}[t]{0.24\linewidth}
    \centering
    \includegraphics[width=1.9in]{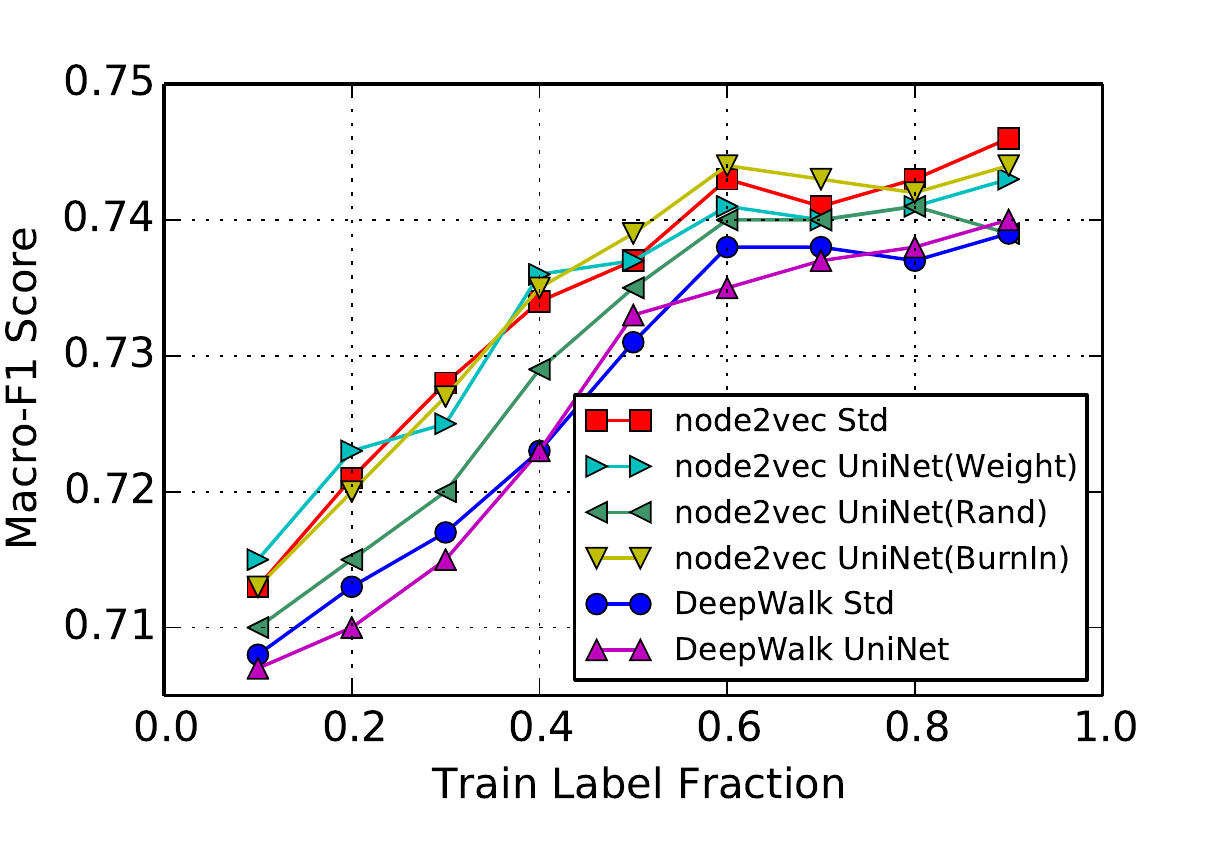}
    \label{subfig:reddit_macro}
     \end{minipage}%
    }%
    \subfigure[Aminer Macro-F1 Score]{
     \begin{minipage}[t]{0.24\linewidth}
    \centering
    \includegraphics[width=1.9in]{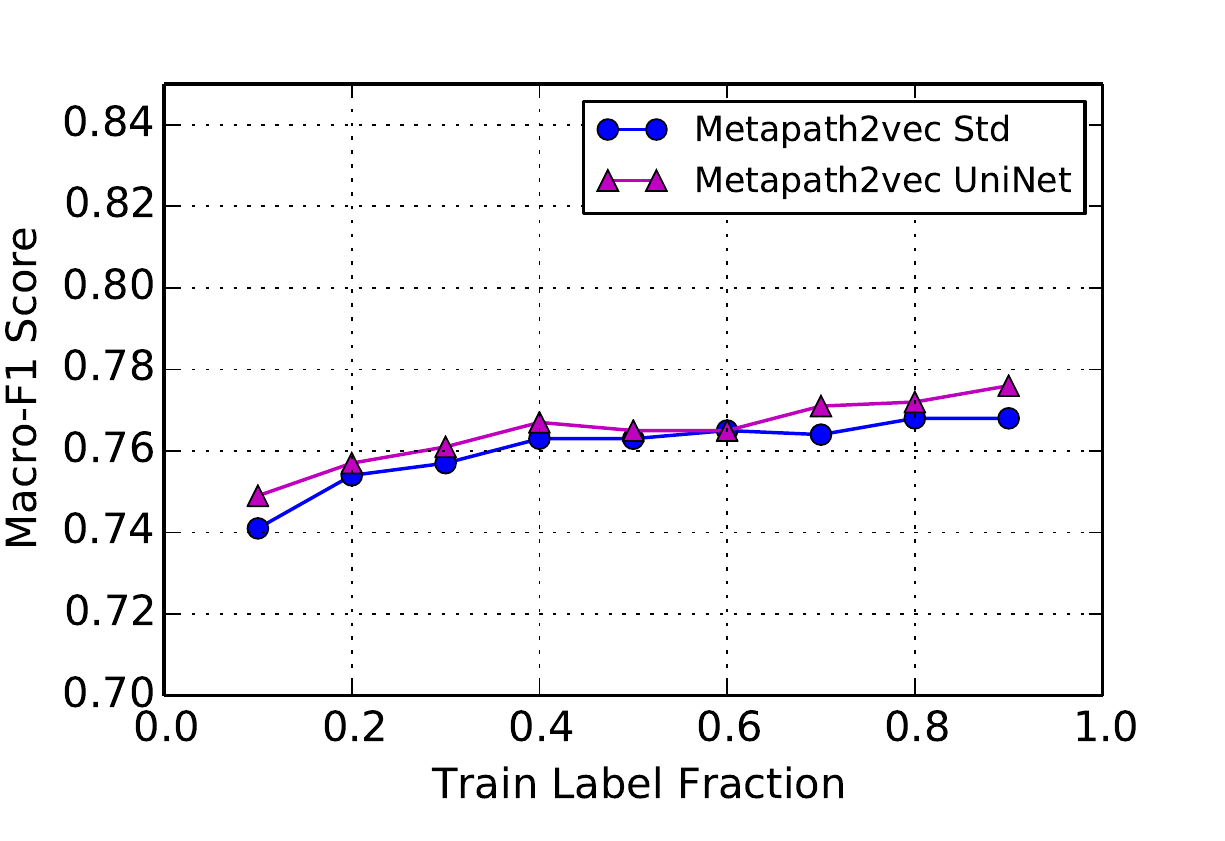}
    \label{subfig:aminer_macro}
     \end{minipage}%
    }%
    \centering
    \caption{The accuracy comparison among \sys and the original implementations on node classification task. \label{fig:acc}} 
    \end{figure*}
\textbf{Sampler Management.} \ 
Each M-H based edge sampler stores the last sample (i.e., $\textsc{Last}_{\mathbf{x}}$ in Algorithm~\ref{alg:sampler}) during the walking process.
The number of edge samplers in \sys equals the size of state space (e.g., $|\mathcal{V}|$ for deepwalk, $|\mathcal{E}|$ for node2vec). 
Such a great quantity of samplers requires proper management in order to meet the demand for fast queries by the state. 
However, the variety of state definitions introduces difficulties to the management of samplers. 
If a balanced tree is used to manage the samplers, each query entails $\mathcal{O}(\log(\#state))$ time complexity, which is
expensive for the random walk phase since each step of the walker (Line 9 in Algorithm~\ref{alg:walks}) requires searching over the samplers. 
Meanwhile, it is not trivial to hash the state for organizing the mapping of the samplers since the composition of the state is not fixed. 

To address this issue, we introduce a 2D data layout for the states as the index for fast querying. 
For the definition of the state of each model, we divide it into two components, \textit{position} and \textit{affixture}. 
The \textit{position} section represents the current residing node of the walker, 
while the \textit{affixture} section contains the extra information needed for the explicit definition of a state on the basis of \textit{position}. 
With regard to node2vec, \textit{affixture} indicates the index of the previously visited node among all the neighbors of the current node. 
As for metapath2vec, \textit{affixture} denotes the node type of current position in the metapath. 
For deepwalk, this section is empty.

On the basis of the decomposition above, we carefully reorganized the layout of the samplers. 
All samplers belonging to the state with the same \textit{position} are organized into a single bucket, in which each sampler is indexed by \textit{affixture}.
Fig.~\ref{fig:2d_data_layout} is an illustration of the layout of the samplers in node2vec.
When receiving a query based on a state, we first locate the bucket according to the \textit{position} in the state structure. 
After that, the \textit{affixture} attribute is utilized for determining the exact sampler from the sampler bucket. 
Such 2D data layout achieves aggregated storage of samplers, thus entailing $\mathcal{O}(1)$ time complexity for each query.

\section{Experimental Study}
\label{sec:expr}
\begin{table}[t]
    \centering
     \small
    \caption{Dataset Statistics.}
    \label{tab:data}
    \begin{tabular}{c|c|c|c|c}
    \hline
    \begin{tabular}[c]{@{}c@{}}Network\\ Dataset\end{tabular} & $|\mathcal{V}|$    & $|\mathcal{E}|$    & \begin{tabular}[c]{@{}c@{}}Mean\\ Degree\end{tabular} & \begin{tabular}[c]{@{}c@{}}\#Types \\of Nodes\end{tabular} \\ \hline
    BlogCatalog                                               & 10.3K  & 668K   & 64.9                                                  & 1                                                       \\ \hline

    Flickr                                                    & 80.5K  & 11.8M  & 146.6                                                 & 1                                                       \\ \hline
    Amazon                                                    & 335K  & 1.9M  & 5.67                                                 & 1                                                       \\ \hline
    {Reddit}                                                    & {231K}   & {11.6M}   & {50.21}                                                  & {1}
    \\ \hline
    YouTube                                                   & 1.1M   & 6.0M   & 5.3                                                   & 1                                                       \\ \hline
    LiveJournal                                               & 4.8M   & 86.2M  & 17.8                                                  & 1                                                       \\ \hline
    Twitter                                                   & 41.6M  & 2.9B   & 69.7                                                  & 1                                                       \\ \hline
    Web-UK                                                    & 105.9M & 6.6B   & 62.6                                                  & 1                                                       \\ \hline \hline
    ACM                                                       & 11.2K  & 34.8K  & 3.11                                                  & 3                                                       \\ \hline
    DBLP                                                      & 37.8K  & 341.6K & 9.04                                                  & 3                                                       \\ \hline
    
    DBIS                                                      & 134.1K & 530.6K & 3.96                                                  & 3                                                       \\ \hline
    AMiner                                                    & 4.9M   & 25.0M  & 5.10                                                  & 3                                                       \\ \hline
    \end{tabular}
  \end{table}
  \begin{table*}[t]
    \centering
    \color{black}
    \caption{{The time cost (seconds) of \sys and the original implementations for the five random walk based NRL models. 
    $T_i$, $T_w$, $T_l$ and $T_t$ are initialization cost, random walk generation cost, learning cost and total cost respectively. 
    \sys (Orig) represents the solution implementing the original sampling method of the models on \sys; \sys (M-H) is our proposed framework with the new M-H based edge sampler.
    `*' means out-of-memory error. `-' means the values are unavailable.}} 
    \label{tb:end_end_cmp}
    \scalebox{0.85}{
    \begin{tabular}{|c|ccc|c||ccc|c||ccc|c||c|c|}
    \hline
    NRL Model            & \multicolumn{4}{c||}{\begin{tabular}[c]{@{}c@{}}Open-sourced Version\end{tabular}} &  \multicolumn{4}{c||}{UniNet (Orig)}  & \multicolumn{4}{c||}{UniNet (M-H)}        & \multicolumn{2}{c|}{Speed-Up of $T_t$} \\ \hline\hline
    Deepwalk    & \multicolumn{1}{c|}{$T_i$}              & \multicolumn{1}{c|}{$T_w$} & \multicolumn{1}{c|}{$T_l$} & \multicolumn{1}{c||}{$T_t$}             &  \multicolumn{1}{c|}{$T_i$} & \multicolumn{1}{c|}{$T_w$}&  \multicolumn{1}{c|}{$T_l$} & $T_t$&  \multicolumn{1}{c|}{$T_i$} & \multicolumn{1}{c|}{$T_w$} &  \multicolumn{1}{c|}{$T_l$} &$T_t$  & $\frac{Orig(T_t)}{M\text{-}H(T_t)}$  &  $\frac{Open(T_t)}{M\text{-}H(T_t)}$     \\ \hline
    BlogCatalog &0.33&  16.47     &   8.34                         &    25.14           &   0.03     & 5.03 &     1.38            & 6.44& 0.02    &  0.11 &     1.38             & 1.51&\textbf{4.3X} & \textbf{16.6X}        \\ 
    Amazon &3.27&  687.23     &   254.52                         &    945.02           &   0.08     & 97.83 &     26.86            & 124.77& 0.53    &  11.87 &     24.19             & 36.59&\textbf{3.4X} & \textbf{26.1X}        \\ 
    Reddit      &5.38& 463.91     &  180.50                          &   649.79            &   0.09      & 360.35 &    21.05             & 381.49& 0.53    & 4.88 &    21.05             & 26.46  &\textbf{14.4X}  &   \textbf{24.6X}    \\
    Flickr      &5.01&  160.88    &     78.34                       &    244.26           &    0.05     & 190.24 &9.78& 200.07&  0.33          &   2.79   &9.78&12.9         & \textbf{15.5X}&\textbf{18.9X}\\ 
    YouTube     &3.35& 2010.4     &      1253.8                      &  3267.6             &  0.24       & 893.13 &   132.58               & 1025.95&  4.35   & 41.80 &   132.58               & 178.73   & \textbf{5.7X}&  \textbf{18.28X}  \\ 
    Twitter     & 130.29 & $>$4h &    -                       &   $>$4h               &   5.26    & $>$4h &    -            & $>$4h&  80.24   & 983.19   &    4983.2             &   6046.63     & -& - \\ 
    Web-UK      & * &  *     &   *                       &  *               &    8.54   & $>$4h &      -         & $>$4h&  100.21   &  1270.48  &      8637.9          &  10008.59     & -& -  \\ \hline\hline
    Node2vec    & \multicolumn{1}{c|}{$T_i$}              & \multicolumn{1}{c|}{$T_w$} & \multicolumn{1}{c|}{$T_l$} & \multicolumn{1}{c||}{$T_t$}             &  \multicolumn{1}{c|}{$T_i$} & \multicolumn{1}{c|}{$T_w$}&  \multicolumn{1}{c|}{$T_l$} & $T_t$&  \multicolumn{1}{c|}{$T_i$} & \multicolumn{1}{c|}{$T_w$} &  \multicolumn{1}{c|}{$T_l$}  & $T_t$  & $\frac{Orig(T_t)}{M\text{-}H(T_t)}$  &  $\frac{Open(T_t)}{M\text{-}H(T_t)}$ \\ \hline
    BlogCatalog& 1184.05  &    602.72  &   8.26                         &    1795.0          &    10.03         &  0.13    & 1.41&11.57  &  1.80 &   0.21                  & 1.41 &  1.80   &\textbf{6.4X}& \textbf{997.2X}   \\ 
    Amazon &1238.4&  707.23     &   163.48                         &    2109.1           &   18.48     & 3.01 &     23.84            & 45.33& 2.89    &  9.37 &     23.43             & 35.69&\textbf{1.27X} & \textbf{59.8X}        \\
    Reddit     & 8937.2 &   3304.2   &  201.24                          &    11442.6         &  240.53      & 3.97 &   27.48                     &271.98     &  2.98    &4.83  &   27.48                      & 35.29      &\textbf{7.7X} & \textbf{324.2X}  \\ 
    Flickr     & $>$4h &  -     &        -                   &              $>$4h &  230.16          &  2.03    & 9.69&     241.88   &     1.37                 &  1.80 & 9.69    & 12.86&\textbf{18.8X} &-\\ 
    YouTube    & 4580.9 &   $>$4h   &        -                   &          $>$4h   &    60.31        &   23.17   &86.45 & 169.93 &    150.09&    40.43                 & 86.45  &    150.09      & \textbf{1.1X}&-\\ 
    Twitter    & * &     *      &    *                   &    *                    &  *  &    *  & *& * &       1379.2             & 1705.7 & 4136.5& 7221.4      &- & - \\ 
    Web-UK     & *&    *        &    *                 &    *                      &  *  &  *  &  *    & *& 1506.1   &  2394.2                 &  8033.4    &  11933.7        & -&-\\ \hline\hline
    Metapath2vec    & \multicolumn{1}{c|}{$T_i$}              & \multicolumn{1}{c|}{$T_w$} & \multicolumn{1}{c|}{$T_l$} & \multicolumn{1}{c||}{$T_t$}             &  \multicolumn{1}{c|}{$T_i$} & \multicolumn{1}{c|}{$T_w$}&  \multicolumn{1}{c|}{$T_l$} & $T_t$ &  \multicolumn{1}{c|}{$T_i$} & \multicolumn{1}{c|}{$T_w$} &  \multicolumn{1}{c|}{$T_l$} &  $T_t$  & $\frac{Orig(T_t)}{M\text{-}H(T_t)}$  &  $\frac{Open(T_t)}{M\text{-}H(T_t)}$    \\ \hline
    ACM       & 0.38 &   7.87    &   3.99                        &   12.24     &    0.02          & 1.93 &  0.41         & 2.36 &    0.05      &   0.25    &  0.41          &    0.71     & \textbf{3.3X}&\textbf{17.2X} \\ 
    DBLP      & 1.02 &  32.90     &  7.26                         &  41.18      &    0.04          &16.19   &   0.56         & 16.79&    0.08       &  0.47    &   0.56          &  1.11         &\textbf{15.1X} &\textbf{37.1X} \\
    DBIS       & 5.49&  123.28     &     55.92                      &  184.69       &     0.07         &  13.04  &  11.13          & 24.24&  0.43   &   2.36    &  11.13          &   13.92      & \textbf{1.7X}&\textbf{13.3X}  \\ 
    Aminer     &19.2 &  4920.4     &    381.25                  & 5320.9    &    0.14     &   975.23         &     131.97       &1107.3       &   14.65   &  50.23    &     131.97        &  196.85       & \textbf{5.6X}& \textbf{27.0X}\\ \hline\hline
    Edge2vec    & \multicolumn{1}{c|}{$T_i$}              & \multicolumn{1}{c|}{$T_w$} & \multicolumn{1}{c|}{$T_l$} & \multicolumn{1}{c||}{$T_t$}             &  \multicolumn{1}{c|}{$T_i$} & \multicolumn{1}{c|}{$T_w$}&  \multicolumn{1}{c|}{$T_l$} & $T_t$&  \multicolumn{1}{c|}{$T_i$} & \multicolumn{1}{c|}{$T_w$} &  \multicolumn{1}{c|}{$T_l$}  &  $T_t$  & $\frac{Orig(T_t)}{M\text{-}H(T_t)}$  &  $\frac{Open(T_t)}{M\text{-}H(T_t)}$ \\ \hline
    ACM       & 1.38 &   260.87    &   3.99                        &  266.24      &  0.03            & 40.14&    0.30       & 40.47 &    0.18      &  0.34     &    0.30       & 0.82         &\textbf{49.4X} &\textbf{324.7X}\\ 
    DBLP      & 3.02 &  1845.2     &  7.26                         &  1855.5      &   0.04           & 64.19  &    0.62       & 64.85 &      0.53    & 1.07    &    0.62        &    2.22    &\textbf{29.2X} & \textbf{835.8X} \\
    DBIS      & 19.18 &   $>$4h    &   -                     &     $>$4h       &     0.09         &985.24 & 16.83 & 1002.2& 3.74                &  5.03          & 16.83  &   25.6    & \textbf{39.1X}&- \\ 
    Aminer    &312.47  &   $>$4h   &   -                     &    $>$4h        &      1.03        & $>$4h &    -    & $>$4h& 69.30  &   117.35        &    423.32      &       609.97   &- &-\\ \hline\hline
    Fairwalk    & \multicolumn{1}{c|}{$T_i$}              & \multicolumn{1}{c|}{$T_w$} & \multicolumn{1}{c|}{$T_l$} & \multicolumn{1}{c||}{$T_t$}             &  \multicolumn{1}{c|}{$T_i$} & \multicolumn{1}{c|}{$T_w$}&  \multicolumn{1}{c|}{$T_l$} & $T_t$&  \multicolumn{1}{c|}{$T_i$} & \multicolumn{1}{c|}{$T_w$} &  \multicolumn{1}{c|}{$T_l$}  & $T_t$    & $\frac{Orig(T_t)}{M\text{-}H(T_t)}$  &  $\frac{Open(T_t)}{M\text{-}H(T_t)}$    \\ \hline
    BlogCatalog& 4.20&  1985.5    &    8.97                       &   1998.7      &  0.07            &  37.07 &    1.83       & 38.97   & 0.23   &  0.29       &    1.83       &2.35        & \textbf{16.58X}& \textbf{850.9X} \\ 
    Amazon &5.07&  1984.3     &   372.96                         &    2362.3           &   0.09     & 93.81 &     23.97            & 117.87& 3.07    &  10.92 &     23.48             & 37.47&\textbf{3.15X} & \textbf{63.0X} \\
    Reddit     &56.01 &  $>$4h    &  -                         &  $>$4h      &      0.10       &   256.49   &     14.85       & 271.44&  7.35   & 9.30       &     14.85        & 31.50         & \textbf{8.6X}&-\\  \hline
    \end{tabular}
    }
\end{table*}
\subsection{Experimental Settings}
\sys is implemented in C++, and the default parallelism is set to 16.
The experiments are carried out on a server with a 24-core Xeon CPU and 96GB of memory.  
Each experiment is repeated five times and the averaged results are reported.

\subsubsection{Datasets}
We utilize eleven network datasets, of which seven are homogeneous networks. 
Table~\ref{tab:data}~summarizes the statistics for the datasets.

\subsubsection{Benchmarks}
We choose the random walk based NRL models listed in Table~\ref{tab:existing_models} as benchmarks.
For the deepwalk and node2vec, the hyper-parameters are set according to the paper of node2vec, while the other three models use the hyper-parameters suggested by the corresponding papers. In the random walk phase of all tests, each node is used as a starting point to generate 10 sequences of length 80. We use the open-sourced codes (see Table~\ref{tab:existing_models} ) released by the authors for comparison with \sys.

\subsection{Accuracy Evaluation of \sys}
\label{subsec:acc}
We evaluate the accuracy of NRL models on 
\sys by running the multi-label node classification task.
We use three models -- deepwalk, node2vec and metapath2vec as representatives. 
For deepwalk and node2vec, we run them on homogeneous networks, BlogCatalog, Flickr, and Reddit datasets. 
The parameters $(p, q)$ of node2vec are set to $(0.25, 4.0)$, $(0.25, 2.0)$, $(0.25, 0.25)$ over BlogCatalog, Flickr, and Reddit, respectively.
For metapath2vec, which operates on heterogeneous networks, we select Aminer for the accuracy evaluation.

 Fig.~\ref{fig:acc} visualizes the Micro-F1 and Macro-F1 of the three models. 
 We present the results of node2vec on \sys with three different initialization strategies, 
  because node2vec uses a biased transition probability distribution, 
  but only visualize the results of deepwalk and metapath2vec on \sys with the random initialization strategy, because the two models use uniform transition probability distribution, 
  the random and high-weight initialization strategies are the same according to Theorem~\ref{theo:high_prob}. 
  First, we can see that \sys achieves comparable accuracy compared with the original implementation of the three models. 
  The results demonstrate that our M-H based edge sampler is effective compared to the original sampling method, e.g., alias method or direct sampling method.
  Second, for node2vec, \sys with random initialization displays a slight drop in accuracy score in comparison with the original version,
  while \sys with high-weight initialization achieves better accuracy. Through profiling the characteristics of probability distributions across the network, we find that there are 97.1\%, 73.8\% and 87.3\% of nodes, whose probability distributions satisfy the condition~\ref{eq:init_cond} on BlogCatalog, Flickr and Reddit respectively. Therefore, according to Theorem~\ref{theo:high_prob}, \sys with high-weight initialization should achieve better accuracy than the one with random initialization. 

\begin{table*}[t]
    \centering
    \small
    \caption{The time cost (sec.) of random walk generation in node2vec model over billion-edge networks.}
    \label{tab:eff}
    \begin{tabular}{|c|c|c|c|c|c||c|c|c|c|c|}
    \hline
    Implementation        & \multicolumn{5}{c||}{Twitter}     & \multicolumn{5}{c|}{Web-UK}       \\ \hline
    $(p, q)$       & (1, 0.25)   & (0.25, 1)      & (1, 1)    & (1, 4)    & (4, 1)  & (1, 0.25) & (0.25, 1) & (1, 1)  & (1, 4)  & (4, 1)  \\ \hline
    Alias          & *           & *              & *         & *         & *       & *         & *         & *       & *       & *       \\ \hline
    Rejection      &   4228.02          &    11304.2            &  4092.19  & 10084.9   &  4157.18       & *         & *         & *       & *       & *       \\ \hline
    KnightKing      & 3601.43       & \textbf{1601.31}        & \textbf{1251.30}    & 9307.82    & 3310.29    & *   & *   & *  & * & * \\ \hline
    {Memory-Aware}      &  {4103.29}      &  {8059.83}       & {3982.45}    &  {8045.32}   & {4028.53}    & {6895.33}   & {12053.82}   & {5903.24}  & {11393.63} & {6023.64} \\ \hline
    UniNet(Rand)   &  \textbf{2535.48}    &  2468.39       &  2503.48  & \textbf{2493.29}   & \textbf{2539.40} & \textbf{2989.39}   & \textbf{2830.48}   & \textbf{3107.99} & \textbf{2846.49} & \textbf{3028.39} \\ \hline
    UniNet(Burn)   & 4363.32     & 4225.56        & 4376.47   & 4301.55   & 4378.56 & 6628.33   & 6273.48   & 6675.29 & 6518.90 & 6597.29 \\ \hline
    UniNet(Weight) & 3329.43     & 3702.18        & 2801.29   & 3245.10   & 3792.17 & 4829.30   & 5229.30   & 3184.28 & 3823.49 & 4592.19 \\ \hline
    \end{tabular}
\end{table*}

\subsection{Efficiency Evaluation of \sys}
In this subsection, we report the time cost of training the five NRL models with \sys over the eleven networks
in Table~\ref{tb:end_end_cmp}. 
{We not only compare our \sys to the open-sourced version of the models, but also compare it to the one implementing the original sampling methods of the models on \sys, denoted by \sys (Orig).
According to the open-sourced codes, except node2vec which uses the alias edge sampler, other four models use direct edge samplers. Our \sys uses M-H based edge sampler with high-weight initialization strategy by default.}

First, it is clear to see that \sys trains 10X-900X faster than the original versions of the models on various networks. Because of the unified framework, different random walk-based NRL models can benefit from the common optimization under the hood.
Second, the efficiency improvements of \sys on large networks or complex models are much more significant.
For example, \sys finishes training node2vec about 3.4 hours on Web-UK, while the original version fails because of the memory explosion problem;
for the metapath2vec model, \sys runs 27X faster than the original version over Aminer.
{Third, compared to \sys (Orig), on small and medium networks, the speed-up of the total cost is about 1.1X-49X, which is less significant than the aforementioned speed-up. This means the framework optimizations in \sys also help improve the efficiency of existing edge samplers. According to the results of node2vec, the M-H based edge sampler achieves better efficiency than the alias edge sampler because of the fast initialization. According to the results of other four models, the M-H based edge sampler outperforms the direct edge sampler due to the fast random walk generation (i.e., fast edge sampling). These observations are consistent with the characteristics of existing edge samplers described in Introduction.
Fourth, on large networks, \sys (Orig) still suffers from the expensive sampling cost or the memory explosion problem. In the next Section, we will compare our M-H based edge sampler with other state-of-the-art sampling methods for random walk generation over large networks.}

\subsection{{Efficiency over Billion-edge Networks}}
\label{subsec:large_networks}
{Here we compare the efficiency of different edge samplers for the five random walk based NRL models on two billion-edge networks -- Twitter and Web-UK. The edge samplers include alias edge sampler, rejection edge sampler, memory-aware edge sampler~\cite{ma_sigmod2020}, KnightKing~\cite{yang2019knightking} and M-H based edge sampler. 
KnightKing is the state-of-the-art solution for the fast distributed random walk generation. 
It uses rejection edge samplers, but adopts an outlier folding technique to efficiently handle the skewed probability distributions. For fairness, we configure KnightKing in the standalone mode. 
Memory-aware edge sampler is optimized for the second-order random walk generation, and it mainly generates an efficient sampler assignment within a certain memory budget. 
In the following experiments, we set the memory budget to be the same size as the memory consumption of \sys.}

\textbf{Node2vec}. For the node2vec, we select five configurations of node2vec hyper-parameters $(p, q)$. Table~\ref{tab:eff} shows the time span for random walk generation. From the table, we clearly see that \sys with M-H edge sampler achieves superior scalability, and it can successfully handle the two billion-edge networks on our server. {The memory-aware edge sampler is also able to handle the two datasets because of its memory-aware assignment strategy, but has low efficiency.} Alias sampler fails due to the out-of-memory issue on both networks. The rejection edge sampler and KnightKing fail on Web-UK due to the same memory problem. 
The concrete reason is that: the rejection edge sampler and KnightKing use the rejection-acceptance paradigm, which requires a proposal distribution (e.g., the probability distribution defined by the static edge weight). Although proposal distribution is simple, to ensure the efficiency, they are still sampled by the alias method. 
As the size of the network grows, the memory storage related to the proposal distribution becomes the bottleneck. 
Instead, our M-H edge sampler uses the uniform probability distribution as the conditional probability mass function, which can be efficiently sampled without using the alias method, so it reduces the consumption of memory significantly. {On Twitter network, with some hyper-parameter configurations, KnightKing can be better than M-H edge sampler in \sys, because of the outline folding technique and its system optimizations.}

\textbf{Other four models}. 
{Deepwalk and metapath2vec use first-order random walk models, so we do not evaluate them with the memory-aware edge sampler. Fairwalk and edge2vec are variants of node2vec, and we set the corresponding $(p,q)$ to be $(1, 1)$ and $(0.25, 0.25)$ respectively following their original papers. Since KnightKing does not provide the implementations of fairwalk and edge2vec, 
we implemented them by ourselves following KnightKing's interfaces. 
For the models using heterogeneous networks, we adopt the method in work~\cite{yang2019knightking} to randomly generate type information for the networks.}

\begin{figure}[t]
    \centering
    \subfigure[Twitter]{
     \begin{minipage}[t]{0.45\linewidth}
    \centering
    \includegraphics[width=1.13\textwidth]{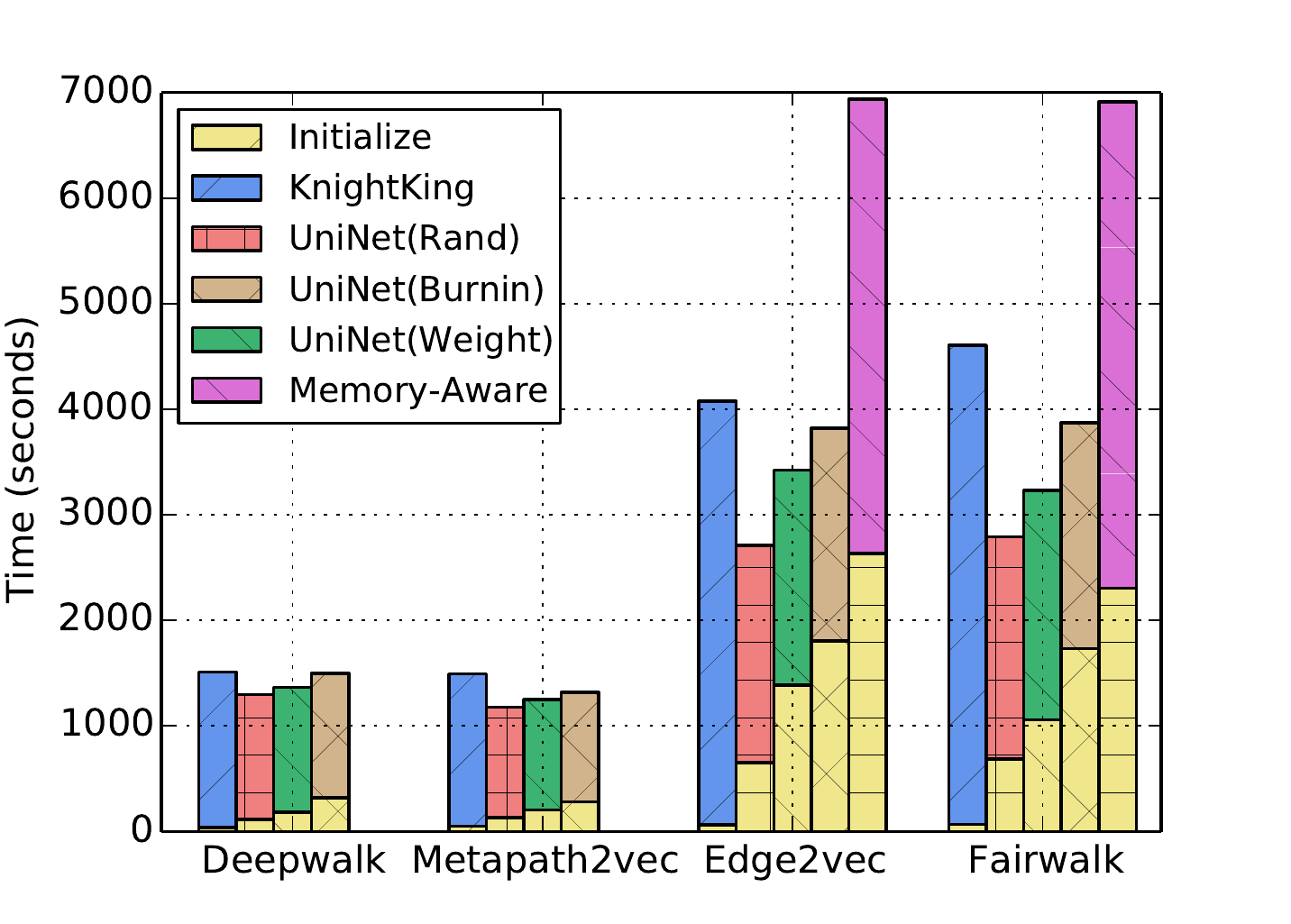}
    \end{minipage}%
    }
    \subfigure[Web-UK]{
    \begin{minipage}[t]{0.55\linewidth}
    \centering
    \includegraphics[width=.86\textwidth]{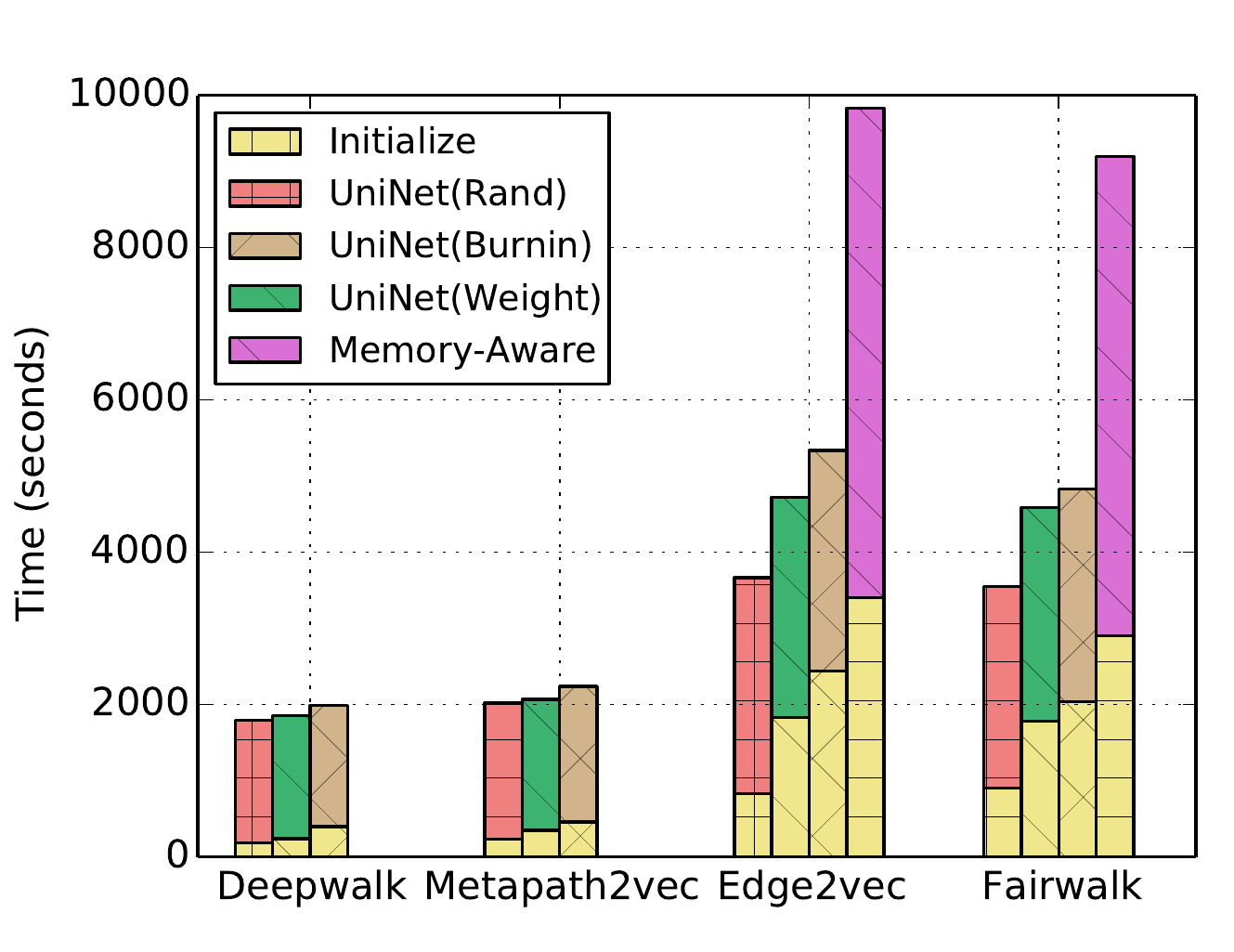}
    \end{minipage}%
    }%
    \centering
    \caption{{The time cost (sec.) of random walk generation in deepwalk, metapath2vec, edge2vec and fairwalk.} \label{fig:four_models_large_graph}} 
\end{figure}

\begin{figure*}[t]
    \centering
    \subfigure[\scriptsize {Varying $p$ in node2vec on LiveJournal}]{
    \label{mi-b1}
    \begin{minipage}[t]{0.24\linewidth}
    \centering
    \includegraphics[width=1.9in]{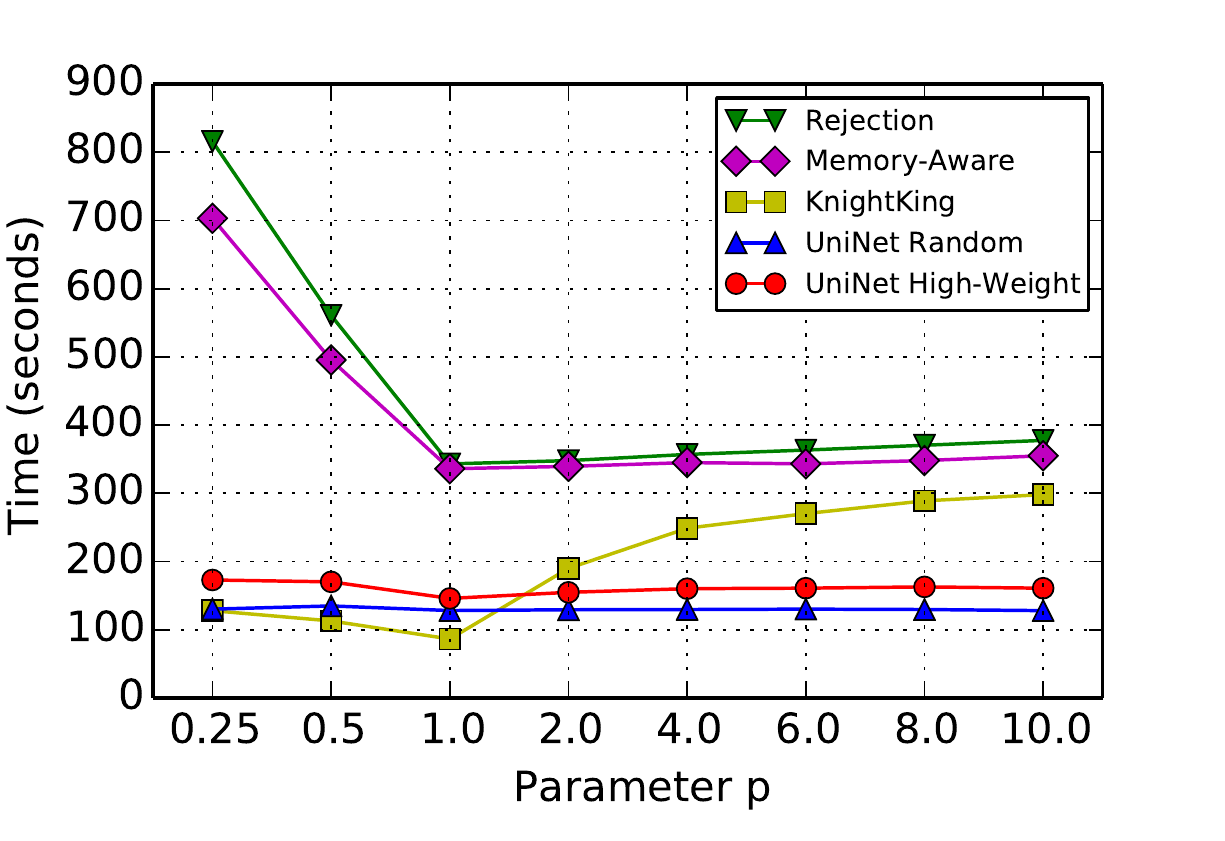}
    \end{minipage}%
    }%
    \subfigure[\scriptsize {Varying $q$ in node2vec on LiveJournal} ]{
    \label{mi-b2}
    \begin{minipage}[t]{0.24\linewidth}
    \centering
    \includegraphics[width=1.9in]{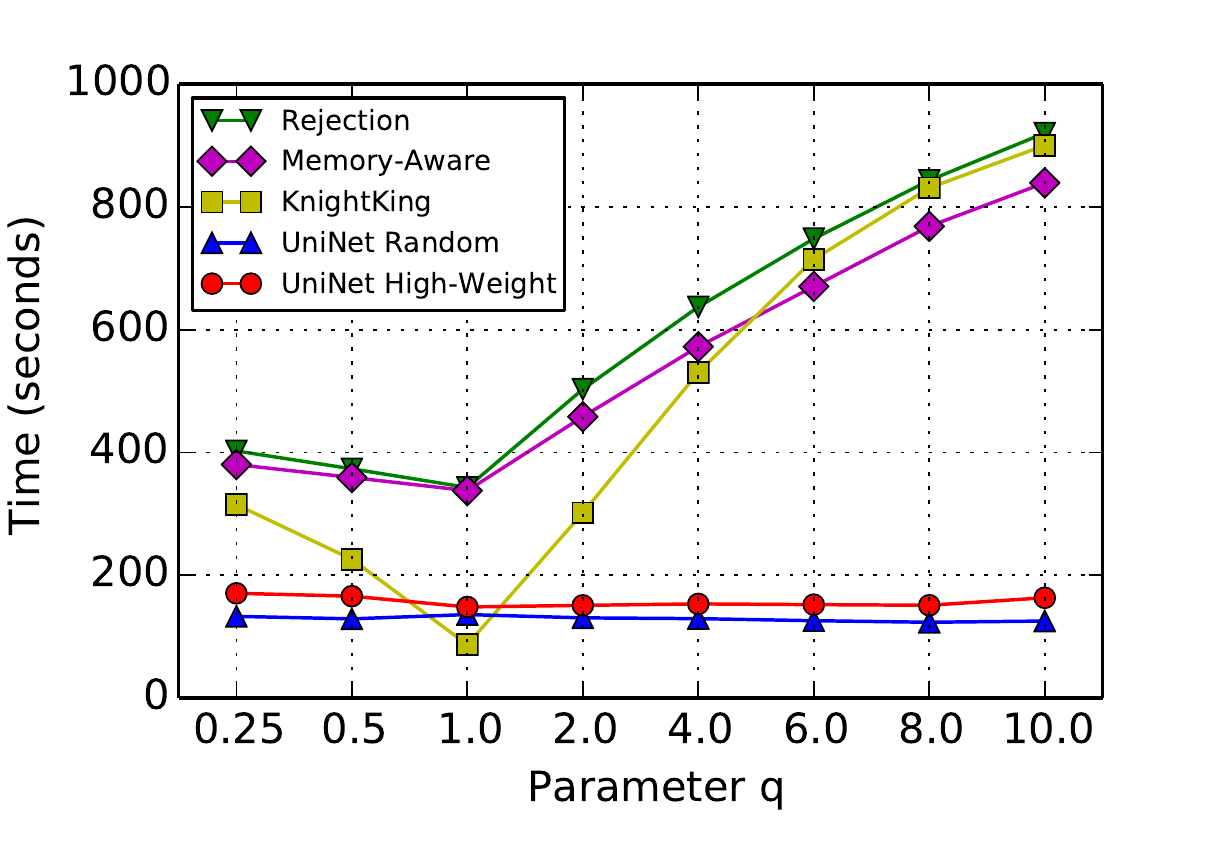}
    \end{minipage}%
    }%
    \subfigure[\scriptsize {Varying $p$ in edge2vec on Aminer}]{
    \label{mi-b3}
    \begin{minipage}[t]{0.24\linewidth}
    \centering
    \includegraphics[width=1.9in]{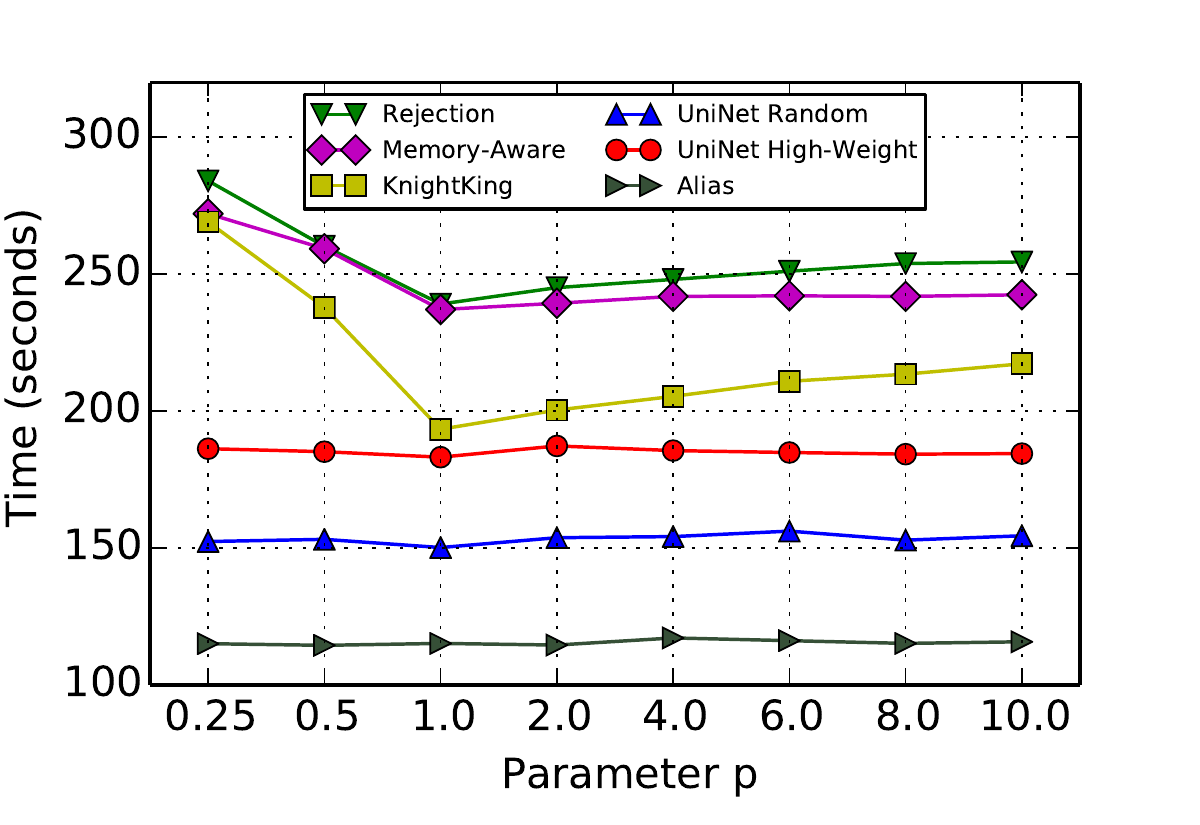}
    \end{minipage}%
    }%
    \subfigure[\scriptsize {Varying $q$ in edge2vec on Aminer}]{
    \label{mi-b4}
    \begin{minipage}[t]{0.24\linewidth}
    \centering
    \includegraphics[width=1.9in]{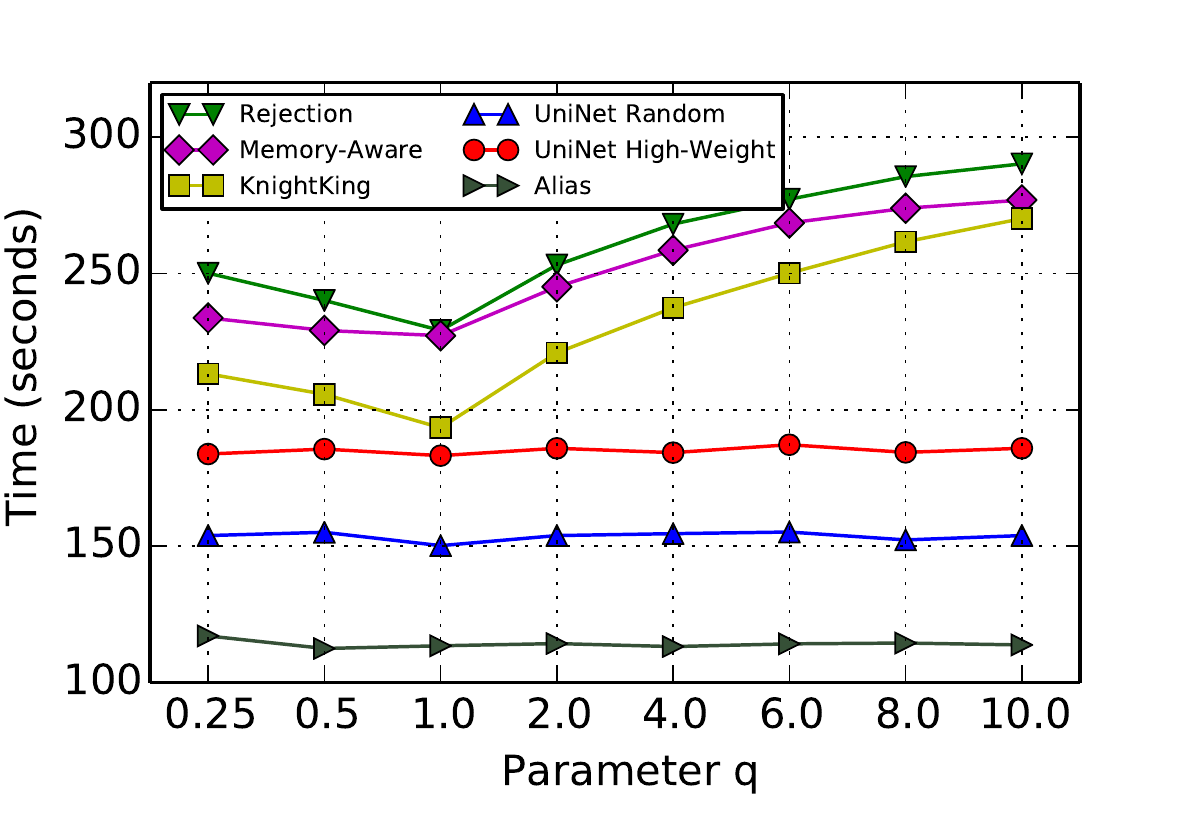}
    \end{minipage}%
    }%
    
    \subfigure[\scriptsize {Varying $p$ in node2vec on Youtube}]{
    \label{mi-b5}
    \begin{minipage}[t]{0.25\linewidth}
    \centering
    \includegraphics[width=1.78in]{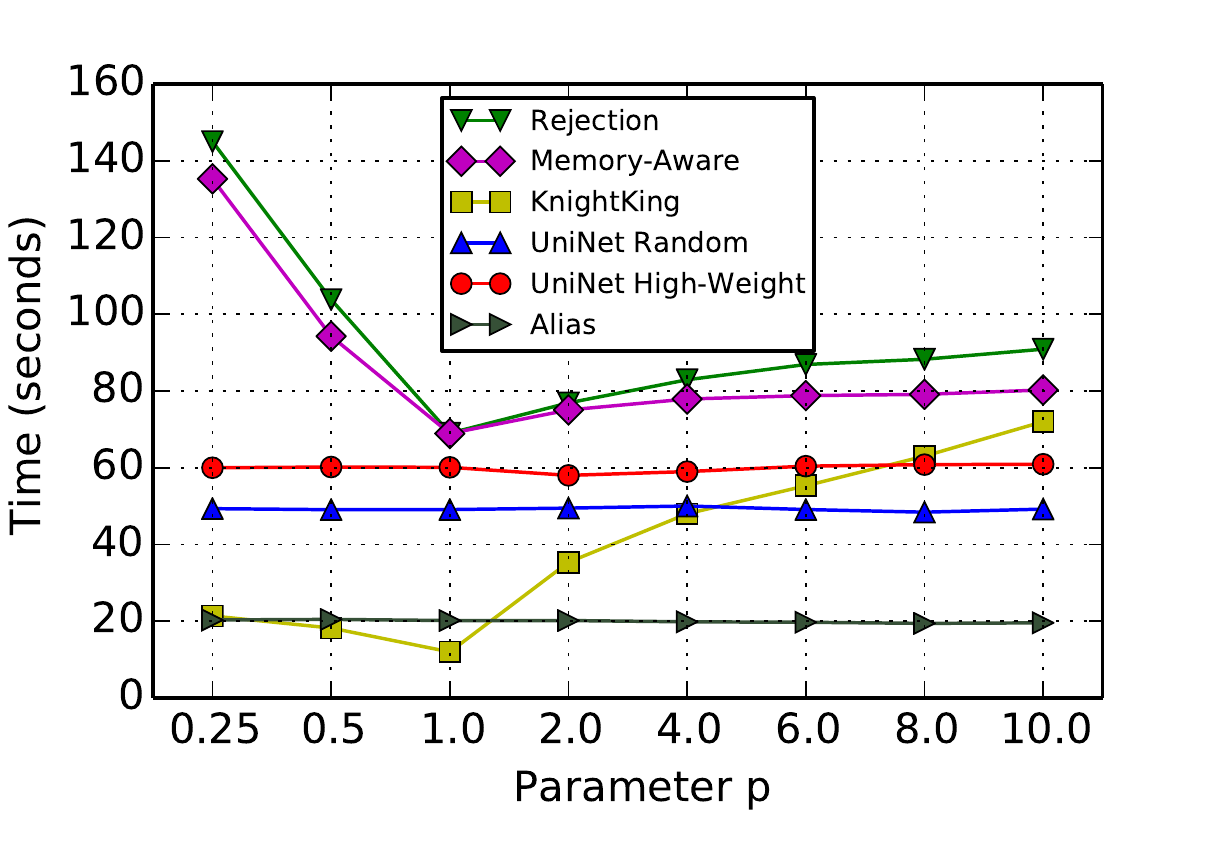}
    \end{minipage}%
    }%
    \subfigure[\scriptsize {Varying $q$ in node2vec on Youtube} ]{
    \label{mi-b6}
    \begin{minipage}[t]{0.25\linewidth}
    \centering
    \includegraphics[width=1.78in]{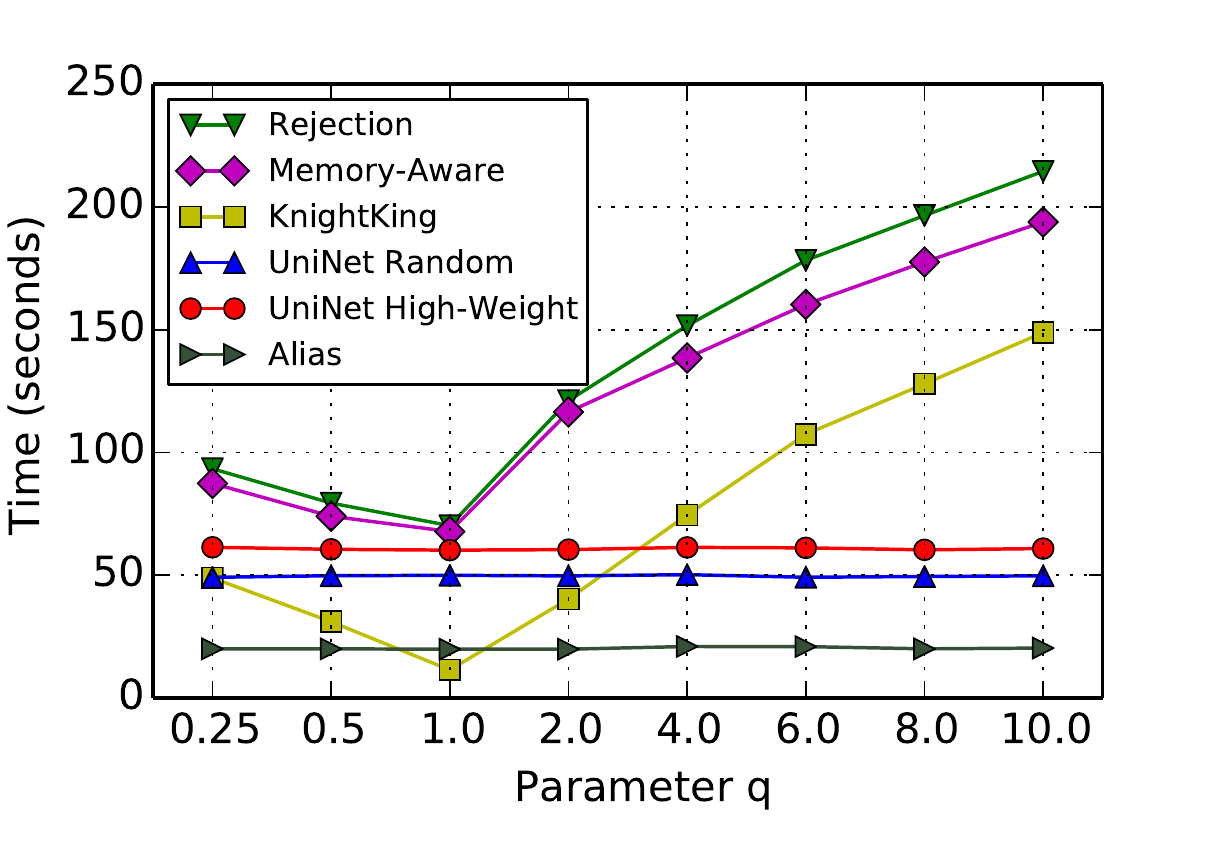}
    \end{minipage}%
    }%
    \subfigure[\scriptsize {Varying $p$ in fairwalk on YouTube}]{
    \label{mi-b7}
    \begin{minipage}[t]{0.24\linewidth}
    \centering
    \includegraphics[width=1.9in]{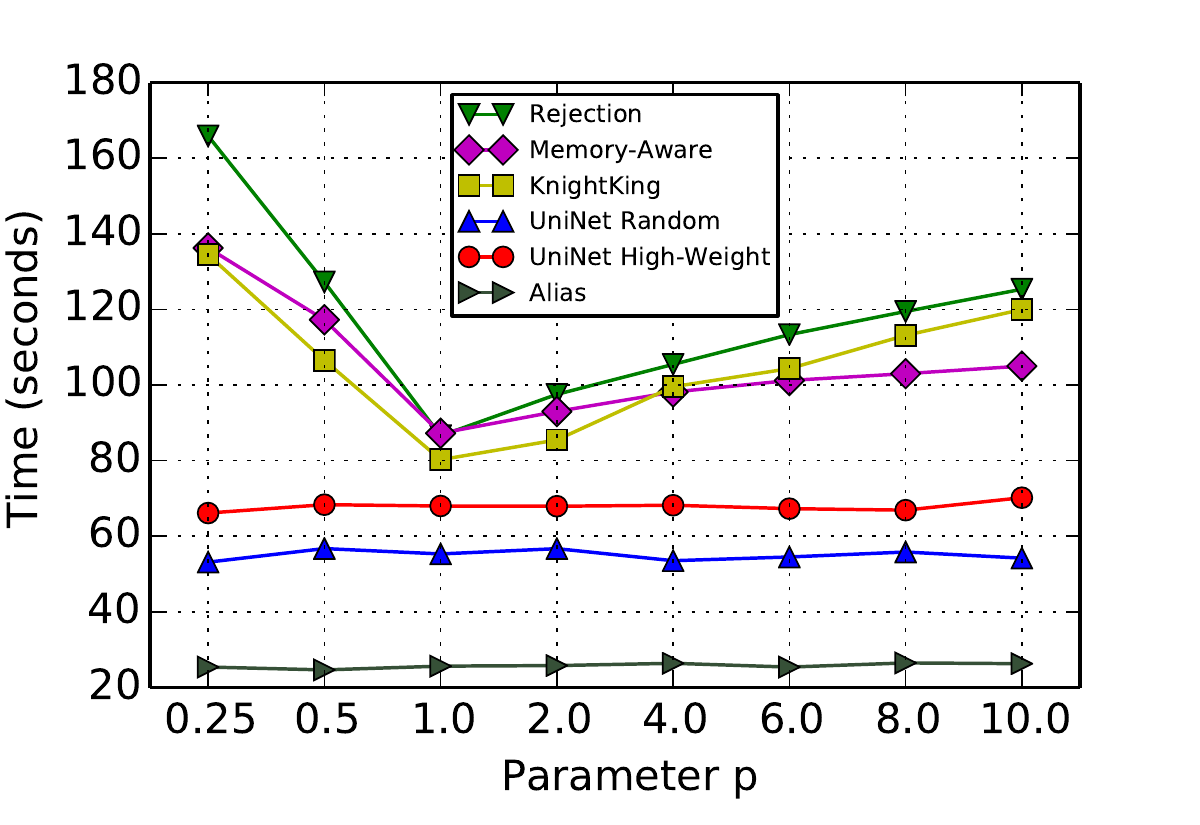}
    \end{minipage}%
    }%
    \subfigure[\scriptsize {Varying $q$ in fairwalk on YouTube}]{
    \label{mi-b8}
    \begin{minipage}[t]{0.24\linewidth}
    \centering
    \includegraphics[width=1.9in]{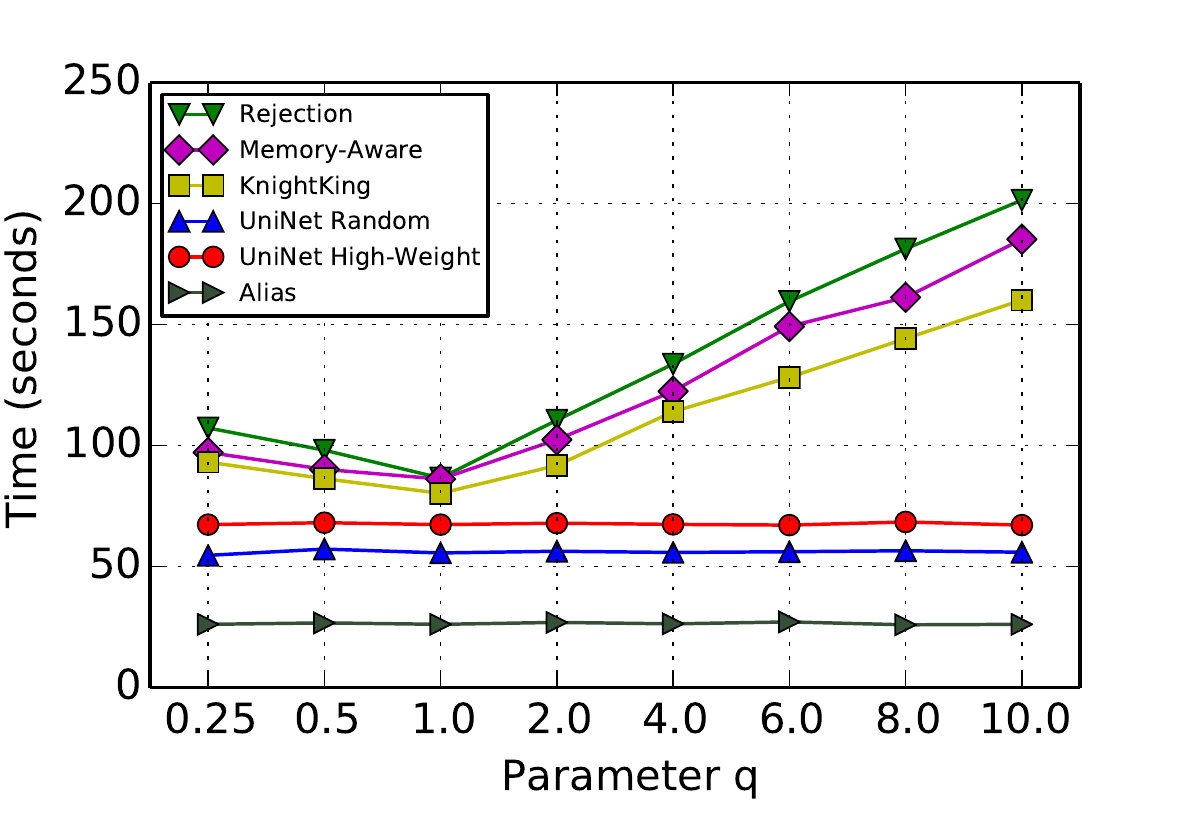}
    \end{minipage}%
    }%
    \centering
    \caption{{Parameter sensitivity of different edge samplers.}}
    \label{fig:parameter_sensitivity}
    \end{figure*}

{Fig. \ref{fig:four_models_large_graph} visualizes the total time costs of the four models with different edge samplers. In the figure, we decompose the total time cost into initialization cost and walking cost. Each yellow bar represents the initialization cost. The results of KnightKing on Web-UK is not shown because of the out-of-memory problem.
From the figure, we can see that in the aspect of the first-order models, i.e., deepwalk and metapath2vec, the total time cost of \sys is close to KnightKing. This is because KnightKing employs alias sampling for the first-order random walk, which entails $\mathcal{O}(1)$ time complexity. Meanwhile KnightKing suffers from the high memory problem causing by the alias sampling, thus failing to handle much larger networks like Web-UK on a single machine. 
For fairwalk and edge2vec, \sys performs the best. Unlike the one in node2vec, the probability distributions in fairwalk and edge2vec are influenced by both the hyper-parameters and the heterogeneous information, and it is hard to figure out a deterministic way to describe the outlier cases in the probability distributions. As a result, KnightKing cannot benefit from the outlier folding technique. The efficiency of memory-aware edge sampler is worse than the ones of KnightKing and \sys, because it only schedules classical samplers within a memory budget and does not optimized for the skewed probability distributions. Despite the low efficiency, the memory-aware edge sampler is able to handle the Web-UK dataset, while KnightKing fails due to memory shortage.}

To summarize, compared to KnightKing and memory-aware sampler, our M-H based edge sampler can achieve 9.6\%-73.2\% efficiency improvement in most of the settings and handle much larger networks.

\textit{The comparison of different initialization costs in M-H edge sampler}. 
{The initialization procedure is critical for M-H edge sampler in efficiency. 
From Fig. \ref{fig:four_models_large_graph}, we can see that in fairwalk and edge2vec, the initialization cost of burn-in strategy takes up 42\%-47\% of the total cost. 
By applying the random and high-weight initialization strategies, 
the ratio is reduced to 24\%-40\%. This is consistent with our analysis in Section~\ref{subsec:init}, and the burn-in strategy requires a large number of iterations of sampling action for the initialization. In our experiments, the number is only set to 100 after parameter tuning, i.e., a smaller number will lead to the model accuracy loss of \sys. Moreover, in conjunction with the results in Section~\ref{subsec:acc}, high-weight strategy entails relatively low initialization overhead compared with burn-in, meanwhile maintaining the model accuracy.}

\subsection{{Sensitivity} of the M-H based Edge Sampler}
In the Introduction, we claim that our edge sampler is stable while rejection edge sampler is parameter sensitive. {To verify this, we conduct sensitivity experiments with node2vec, fairwalk and edge2vec since their transition probability distributions can be easily tuned by varying two hyper-parameters $p$ and $q$.} Specifically, for the two hyper-parameters, we fix one hyper-parameter to be 1, and then vary the other in [0.25, 0.5, 1.0, 2.0, 4.0, 6.0, 8.0, 10] to produce different transition probability distributions. Fig.~\ref{fig:parameter_sensitivity} illustrates the experimental results. We execute node2vec on YouTube and LiveJournal, fairwalk on YouTube, and edge2vec on Aminer.
The results of alias edge sampler are not visualized on LiveJournal because of the out-of-memory problem.

\textbf{1)} It is clear to see that the efficiency of M-H edge sampler in \sys and alias edge sampler remains stable,
while the time costs of {memory-aware edge sampler}, rejection edge sampler and KnightKing vary a lot. 
In Fig~\ref{mi-b2}, when changing $q$ in node2vec on LiveJournal from 1.0 to 10,
\sys with High-Weight entails around {160s} to finish the workload, 
but the time cost of KnightKing increases from {87s to 900s.} The results of edge2vec and fairwalk are similar.
It is because, as analyzed in Section~\ref{subsec:sample}, the time cost of M-H based edge sampler in \sys is only affected by the computation of dynamic weight edge, and is not related to the value of hyper-parameters. 
In contrast, the time complexity of rejection-based edge samplers is related to the shape of probability distribution, which is heavily influenced by the hyper-parameters.

\textbf{2)} The efficiency of KnightKing with varying $q$ in node2vec is more unstable than the one with varying $p$. This is because, according to Eq.~\ref{equ:node2vec}, parameter $q$ affects multiple dynamic edge weights, which leads to multiple outliers, while parameter $p$ affects exact one dynamic edge weight. This results indicate that the outlier folding technique is more effective when the number of outliers is small. 
{In Figure~\ref{mi-b3} and~\ref{mi-b7}, unlike node2vec, when $p$ is smaller than 1.0, the KnightKing on edge2vec and fairwalk is inefficient as well. As discussed in Section~\ref{subsec:large_networks}, the properties of the transition probability distributions in edge2vec and fairwalk are also influenced by the heterogeneous information which makes them non-deterministic, so users cannot predefine the outlier cases (easily), making the outlier folding technique ineffective.}

\textbf{3)} KnightKing outperforms the rejection edge sampler, {and is better than \sys with some hyper-parameters on node2vec}, because of the sophisticated systematic optimizations and the optimizations in terms of rejection sampling algorithm including pre-acceptance and the outlier folding technique, while other samplers are implemented in \sys, which is a simple prototype for proof-of-concept. 
{In addition, on edge2vec and fairwalk, due to the non-deterministic outlier cases in the transition probability distributions, KnightKing performs worse than \sys.}
\section{Related Work}
\subsection{Network Representation Learning}

Besides the random walk based NRL models studied in our paper, another emerging category is deep graph neural network~\cite{kipf2016semi}, which uses deep neural network to learn network representation. 
The model development in both categories is usually conducted over small datasets, most of existing models have poor scalability to the large scale datasets. 
Recently, some researches begin to optimize the efficiency of random walk NRL models. 
Zhou et al.~\cite{zhou2018efficient} introduced a distributed solution for node2vec with a pregel-like graph computation framework.
It computes the transition probability during the random walk to reduce memory space, and decrease the computation overhead with sophisticated optimization techniques. This solution requires a cluster to handle billion-edge graphs. 
Similarly, PyTorch-BigGraph~\cite{mlsys2019_71} proposed by Facebook uses graph partitioning technique and parameter server architecture to learn embeddings from large multi-relation networks, and introduces several modifications to the standard models.
Our \sys is a single node solution, and is optimized for general random walk based NRL models.

\subsection{{Metropolis-Hastings Sampling Techniques}}
{Metropolis-Hastings (M-H) algorithm is first proposed by Nicholas Metropolis for the case of symmetrical proposal distributions~\cite{Metropolis1953}.
A basic problem of the M-H algorithm is convergence. Roberts et al.~\cite{roberts1994simple} introduced simple conditions for the convergence of M-H algorithms. The works~\cite{mengersen1996rates,HOLDEN1998371,chauveau2006selection} further theoretically analyzed the conditions for the geometric convergence. 
However, up to now, it is still an open question to show that how many iterations the M-H algorithms require to be converged in general, thus bringing in the ``burn-in'' period for initialization. In this paper, our M-H based edge sampler uses two new efficient heurestic intialization strategies instead of the expensive ``burn-in'' period. We establish theorems which guide users to properly choose initialization strategy with regard to the target probability distributions.}


{Moreover, in the context of networks, Metropolis-Hastings random walk~\cite{lovasz1993random,7113345} is often used to (uniformally) sample nodes. It describes a simple way to modify the random walk, so that it converges to a prescribed node probability distribution. The M-H based edge sampler in \sys, to the best of our knowledge, is the first method applying the M-H sampling technique to converge to the transition probability distributions.}

\subsection{{Fast Random Walk in Networks}}

{Performing random walk generation efficiently in networks is a fundamental task.
Atish et al.~\cite{dist_rw_2009, dist_rw_2015} proposed several fast distributed random walk algorithms, 
which minimize the number of rounds and messages required in distributed settings.
Zhou et al.~\cite{zhou2018efficient} introduced a distributed second-order random walk algorithm on a Pregel-like graph computation framework~\cite{10.1145/1807167.1807184,6823116} to speed up the node2vec model.} Yang et al.~\cite{yang2019knightking} proposed a new distributed random walk engine on the basis of optimized rejection sampling method, which exhibits promising performance while ensuring low memory consumption. 
Besides the distributed solutions, there also exist serveral efficient single-node solutions. 
DrukarMob~\cite{kyrola2013drunkardmob} is a specialized system for fast random walk running out-of-core on a single machine with multi-core processors. DurkarMob only focuses on the standard walk and optimizes network organization to achieve performance improvement. 
{The memory-aware random walk framework~\cite{ma_sigmod2020} dynamically assigns different samping methods for the nodes to miminize the random walk  time cost within the memory budget, but it is only optimized for the second-order random walk models.
\sys in this paper improves the efficiency of random walk generation by applying the new M-H based edge sampler, which further helps unify various random walk based NRL models and scale them to large graphs.}

\section{Conclusion}
\label{sec:conc}
Random walk based NRL model is an effective approach to learn structural information of networks nowadays.
In this work, we were concentrated on unifying existing random walk based NRL models into an optimized framework \sys, which supports the models to efficiently learn from large networks, especially billion-edge networks. We adopted Metropolis-Hastings sampling for edge sampling, dramatically enhancing the efficiency of random walk generation phase in the NRL models. In addition, we theoretically discussed the impact of different initialization strategies of our edge sampler for the accuracy of the model, and carefully introduced two initialization strategies to balance the efficiency and accuracy. Finally, to unify the NRL models, we introduced a random walk model abstraction, which allows users to define different random walk models on demand.
The empirical studies clearly demonstrated that \sys trains random walk based NRL models significantly faster than existing solutions.
\begin{appendices}
\section{The Proof of Theorem~\ref{theo:high_prob}}
\label{app:proof}
\small
\begin{proof}
    \textbf{High-weight initialization}. Assume there are $t$ different elements whose probability is $\pi_{max}=\max\limits_{x\in\Omega}(\pi(x))$, and each element can be sampled uniformly, then 
    \begin{equation}
        \pi^0(x) = \left\{
            \begin{array}{ll}
                \frac{1}{t} & x=\mathop{\arg\max}\limits_{x\in\Omega}(\pi(x)), \\
                0 & otherwise .
            \end{array}
        \right.
    \end{equation}
    From Eq.~\ref{eq:kappa} and the definition of supreme norm, we have  
    \begin{equation}
        \kappa_h = \max \{ \frac{1}{t\pi_{\mathrm{max}}} - 1,1 \}.
    \end{equation}

    \textbf{Random Initialization}.
    Accordingly, if $\pi^0$ is uniform, i.e., $\pi^0(x)=\frac{1}{n}$, we have 
    \begin{equation}
        \kappa_r = \max \{ 1 - \frac{1}{n\pi_{\mathrm{max}}},\frac{1}{n\pi_{\mathrm{min}}} - 1\}.
    \end{equation}

    To figure out the conditions for $\kappa_h < \kappa_r$, we discuss by cases.
    
    \textbf{Case 1}: $\frac{1}{t\pi_\mathrm{max}} - 1 > 1$, which leads to $\pi_\mathrm{max} < \frac{1}{2t}$. In this case,  
    \begin{equation}
        \kappa_h = \frac{1}{\pi_\mathrm{max}} - 1.
    \end{equation}

    Case 1.1: $1 - \frac{1}{n\pi_{\mathrm{max}}} > \frac{1}{n\pi_{\mathrm{min}}} - 1$. Let $\kappa_h < \kappa_r$, we can get  
    \begin{equation}
        \pi_\mathrm{max} > \frac{n + t}{2nt} > \frac{1}{2t},
    \end{equation}
    which contradicts with the precondition that $\pi_\mathrm{max} < \frac{1}{2t}$.

    Case 1.2: $1 - \frac{1}{n\pi_{\mathrm{max}}} < \frac{1}{n\pi_{\mathrm{min}}} - 1$. 
    Let $\kappa_h < \kappa_r$, we can get 
    \begin{equation}
        \label{eq:res_1}
        \frac{\pi_\mathrm{max}}{\pi_\mathrm{min}} > \frac{n}{t}.
    \end{equation}
    From Eq. \ref{eq:res_1} and $\pi_\mathrm{max} < \frac{1}{2t}$ it can be deduced that the prerequisite condition for case 1.2 is guaranteed to be satisfied.
    
    \textbf{Case 2}: $\frac{1}{t\pi_\mathrm{max}} - 1 < 1$, which leads to $\pi_\mathrm{max} > \frac{1}{2t}$. In this case, $k_h = 1$.
    
    Case 2.1: $1 - \frac{1}{n\pi_{\mathrm{max}}} < \frac{1}{n\pi_{\mathrm{min}}} - 1$. 
    Let $\kappa_h < \kappa_r$, we can get 
    \begin{equation}
        \pi_{\mathrm{min}} < \frac{1}{2n},
    \end{equation}
    which directly fulfills the prerequisite condition for case 2.1.
  
    Case 2.2: $1 - \frac{1}{n\pi_{\mathrm{max}}} > \frac{1}{n\pi_{\mathrm{min}}} - 1$. The condition $\kappa_h < \kappa_r$
    The conclusion can be derived from condition $\kappa_h > \kappa_r$ is impossible since $\kappa_h = 1 >1 - \frac{1}{n\pi_{\mathrm{max}}} = \kappa_r$.
       
    In summary, if $\kappa_h < \kappa_r$, then
    
    \begin{equation}
        \left\{
            \begin{array}{l}
                \pi_{\mathrm{max}} < \frac{1}{2t} \\
                \frac{\pi_{\mathrm{max}}}{\pi_{\mathrm{min}}} > \frac{n}{t} 
            \end{array}
        \right.
        or \
        \left\{
            \begin{array}{l}
                \pi_{\mathrm{max}} \ge \frac{1}{2t} \\
                \pi_{\mathrm{min}} < \frac{1}{2n} 
            \end{array}
        \right. 
    \end{equation}
\end{proof}
\end{appendices}
%
%
\bibliographystyle{plain}
\bibliography{main.bib}

\begin{thebibliography}{10}

\bibitem{bengio2013representation}
Yoshua Bengio, Aaron Courville, and Pascal Vincent.
\newblock Representation learning: A review and new perspectives.
\newblock {\em TPAMI}, 35(8):1798--1828, 2013.

\bibitem{chauveau2006selection}
Didier Chauveau and Pierre Vandekerkhove.
\newblock Selection of a mcmc simulation strategy via an entropy convergence
  criterion.
\newblock {\em arXiv preprint}, 2006.

\bibitem{chib1995understanding}
Siddhartha Chib and Edward Greenberg.
\newblock Understanding the metropolis-hastings algorithm.
\newblock {\em The american statistician}, 49(4):327--335, 1995.

\bibitem{dist_rw_2015}
Atish Das~Sarma, Anisur~Rahaman Molla, and Gopal Pandurangan.
\newblock Efficient random walk sampling in distributed networks.
\newblock {\em J. Parallel Distrib. Comput.}, 77(C):84–94, 2015.

\bibitem{dist_rw_2009}
Atish Das~Sarma, Danupon Nanongkai, and Gopal Pandurangan.
\newblock Fast distributed random walks.
\newblock In {\em PODC}, page 161–170, 2009.

\bibitem{metapath2vec}
Yuxiao Dong, Nitesh~V. Chawla, and Ananthram Swami.
\newblock Metapath2vec: Scalable representation learning for heterogeneous
  networks.
\newblock In {\em KDD}, pages 135--144, 2017.

\bibitem{FORTUNATO201075}
Santo Fortunato.
\newblock Community detection in graphs.
\newblock {\em Physics Reports}, 486(3):75 -- 174, 2010.

\bibitem{edge2vec}
Zheng Gao, Gang Fu, Chunping Ouyang, Satoshi Tsutsui, Xiaozhong Liu, Jeremy~J.
  Yang, Christopher~R Gessner, Brian Foote, David~J. Wild, Ying Ding, and
  Qi~Yu.
\newblock edge2vec: Representation learning using edge semantics for biomedical
  knowledge discovery.
\newblock {\em BMC Bioinformatics}, 20, 2019.

\bibitem{gelmanbda04}
Andrew Gelman, John~B. Carlin, Hal~S. Stern, and Donald~B. Rubin.
\newblock {\em Bayesian Data Analysis}.
\newblock Chapman and Hall/CRC, 2nd ed. edition, 2004.

\bibitem{grimmett2001probability}
Geoffrey Grimmett and David Stirzaker.
\newblock {\em Probability and random processes}, volume~80.
\newblock Oxford university press, 2001.

\bibitem{grover2016node2vec}
Aditya Grover and Jure Leskovec.
\newblock node2vec: Scalable feature learning for networks.
\newblock In {\em KDD}, pages 855--864, 2016.

\bibitem{HOLDEN1998371}
Lars Holden.
\newblock Geometric convergence of the metropolis-hastings simulation
  algorithm.
\newblock {\em Statistics \& Probability Letters}, 39(4):371 -- 377, 1998.

\bibitem{8663393}
S.~{Ji}, N.~{Satish}, S.~{Li}, and P.~K. {Dubey}.
\newblock Parallelizing word2vec in shared and distributed memory.
\newblock {\em TPDS}, 30(9):2090--2100, 2019.

\bibitem{kipf2016semi}
Thomas~N. Kipf and Max Welling.
\newblock Semi-supervised classification with graph convolutional networks.
\newblock In {\em ICLR}, 2017.

\bibitem{Kullback51klDivergence}
S.~Kullback and R.~A. Leibler.
\newblock On information and sufficiency.
\newblock {\em Ann. Math. Statist.}, 22(1):79--86, 1951.

\bibitem{kyrola2013drunkardmob}
Aapo Kyrola.
\newblock Drunkardmob: billions of random walks on just a pc.
\newblock {\em RecSys}, pages 257--264, 2013.

\bibitem{mlsys2019_71}
Adam Lerer, Ledell Wu, Jiajun Shen, Timothee Lacroix, Luca Wehrstedt, Abhijit
  Bose, and Alex Peysakhovich.
\newblock Pytorch-biggraph: A large scale graph embedding system.
\newblock In {\em SysML}, pages 120--131. 2019.

\bibitem{7113345}
R.~{Li}, J.~X. {Yu}, L.~{Qin}, R.~{Mao}, and T.~{Jin}.
\newblock On random walk based graph sampling.
\newblock In {\em ICDE}, pages 927--938, 2015.

\bibitem{lovasz1993random}
L.~Lov{\'a}sz.
\newblock Random walks on graphs: A survey.
\newblock {\em Combinatorics, Paul Erdos is Eighty}, 2(1):1--46, 1993.

\bibitem{10.1145/1807167.1807184}
Grzegorz Malewicz, Matthew~H. Austern, Aart~J.C Bik, James~C. Dehnert, Ilan
  Horn, Naty Leiser, and Grzegorz Czajkowski.
\newblock Pregel: A system for large-scale graph processing.
\newblock In {\em SIGMOD}, pages 135--146, 2010.

\bibitem{direct_sampling}
George Marsaglia.
\newblock Generating discrete random variables in a computer.
\newblock {\em Commun. ACM}, 6(1):37--38, 1963.

\bibitem{mengersen1996rates}
Kerrie~L Mengersen, Richard~L Tweedie, et~al.
\newblock Rates of convergence of the hastings and metropolis algorithms.
\newblock {\em The annals of Statistics}, 24(1):101--121, 1996.

\bibitem{Metropolis1953}
Nicholas Metropolis, Arianna~W. Rosenbluth, Marshall~N. Rosenbluth, Augusta~H.
  Teller, and Edward Teller.
\newblock Equation of state calculations by fast computing machines.
\newblock {\em The Journal of Chemical Physics}, 21(6):1087--1092, 1953.

\bibitem{mikolov2013efficient}
Tomas Mikolov, Kai Chen, Greg Corrado, and Jeffrey Dean.
\newblock Efficient estimation of word representations in vector space.
\newblock {\em arXiv preprint}, 2013.

\bibitem{mikolov2013distributed}
Tomas Mikolov, Ilya Sutskever, Kai Chen, Greg~S Corrado, and Jeff Dean.
\newblock Distributed representations of words and phrases and their
  compositionality.
\newblock In {\em NIPS}, pages 3111--3119, 2013.

\bibitem{OPSAHL2009155}
Tore Opsahl and Pietro Panzarasa.
\newblock Clustering in weighted networks.
\newblock {\em Social Networks}, 31(2):155 -- 163, 2009.

\bibitem{10.1145/2983323.2983361}
Erik Ordentlich, Lee Yang, Andy Feng, Peter Cnudde, Mihajlo Grbovic, Nemanja
  Djuric, Vladan Radosavljevic, and Gavin Owens.
\newblock Network-efficient distributed word2vec training system for large
  vocabularies.
\newblock In {\em CIKM}, pages 1139--1148, 2016.

\bibitem{perozzi2014deepwalk}
Bryan Perozzi, Rami Al-Rfou, and Steven Skiena.
\newblock Deepwalk: Online learning of social representations.
\newblock In {\em KDD}, pages 701--710, 2014.

\bibitem{fairwalk}
Tahleen Rahman, Bartlomiej Surma, Michael Backes, and Yang Zhang.
\newblock Fairwalk: Towards fair graph embedding.
\newblock In {\em IJCAI}, pages 3289--3295, 2019.

\bibitem{roberts1994simple}
Gareth~O Roberts and Adrian~FM Smith.
\newblock Simple conditions for the convergence of the gibbs sampler and
  metropolis-hastings algorithms.
\newblock {\em Stochastic processes and their applications}, 49(2):207--216,
  1994.

\bibitem{6823116}
Y.~{Shao}, B.~{Cui}, and L.~{Ma}.
\newblock Page: A partition aware engine for parallel graph computation.
\newblock {\em TKDE}, 27(2):518--530, 2015.

\bibitem{ma_sigmod2020}
Yingxia Shao, Shiyue Huang, Xupeng Miao, Bin Cui, and Lei Chen.
\newblock Memory-aware framework for efficient second-order random walk on
  large graphs.
\newblock In {\em SIGMOD}, page 1797–1812, 2020.

\bibitem{tsitsulin2018verse}
Anton Tsitsulin, Davide Mottin, Panagiotis Karras, and Emmanuel M{\"u}ller.
\newblock Verse: Versatile graph embeddings from similarity measures.
\newblock In {\em WWW}, pages 539--548, 2018.

\bibitem{walker1977efficient}
Alastair~J Walker.
\newblock An efficient method for generating discrete random variables with
  general distributions.
\newblock {\em TOMS}, 3(3):253--256, 1977.

\bibitem{yang2019knightking}
Ke~Yang, MingXing Zhang, Kang Chen, Xiaosong Ma, Yang Bai, and Yong Jiang.
\newblock Knightking: a fast distributed graph random walk engine.
\newblock In {\em SOSP}, pages 524--537, 2019.

\bibitem{zhang2018link}
Muhan Zhang and Yixin Chen.
\newblock Link prediction based on graph neural networks.
\newblock In {\em NIPS}, pages 5165--5175, 2018.

\bibitem{zhou2018efficient}
Dongyan Zhou, Songjie Niu, and Shimin Chen.
\newblock Efficient graph computation for node2vec.
\newblock {\em arXiv preprint}, 2018.

\end{thebibliography}

\end{document}